\documentclass{article}

\PassOptionsToPackage{numbers, compress}{natbib}
\usepackage[preprint]{neurips_2026}

\usepackage[utf8]{inputenc} 
\usepackage[T1]{fontenc}    
\usepackage{hyperref}       
\usepackage{url}            
\usepackage{booktabs}       
\usepackage{amsfonts}       
\usepackage{nicefrac}       
\usepackage{microtype}      
\usepackage{xcolor}         

\usepackage{algorithm}
\usepackage{algorithmic}
\usepackage{amsmath}
\usepackage{amssymb}
\usepackage{mathtools}
\usepackage{amsthm}
\usepackage{cleveref}
\usepackage{multirow}
\usepackage{pdfpages}
\usepackage{ulem}

\theoremstyle{plain}
\newtheorem{theorem}{Theorem}
\newtheorem{definition}{Definition}
\newtheorem{proposition}{Proposition}
\newtheorem{statement}{Statement}
\newtheorem{lemma}{Lemma}

\title{The Weight Gram Matrix Captures\\ Sequential Feature Linearization in Deep Networks}

\author{
  Taehun Cha \\
  Department of Mathematics \\
  Korea University \\
  \texttt{cth127@korea.ac.kr} \\
  \And
  Daniel Beaglehole \\
  Computer Science and Engineering \\
  UC San Diego \\
  \texttt{dbeaglehole@ucsd.edu} \\
  \And
  Adityanarayanan Radhakrishnan \\
  MIT Mathematics \\
  Broad Institute of MIT and Harvard \\
  \texttt{aradha@mit.edu} \\
  \And
  Donghun Lee\thanks{Corresponding author.} \\
  Department of Mathematics \\
  Korea University \\
  \texttt{holy@korea.ac.kr} \\
}

\begin{document}

\maketitle

\begin{abstract}
    Understanding how deep neural networks learn representations remains a central challenge in machine learning theory. 
    In this work, we propose a feature-centric framework for analyzing neural network training by relating weight updates to feature evolution. 
    We introduce a simple identity, the \textit{Feature Learning Equation}, which identifies the weight Gram matrix as the key object capturing feature dynamics. 
    This enables us to interpret gradient descent as implicitly inducing a hypothetical evolution of features, whose covariance structure --- termed the \textit{Virtual Covariance} --- characterizes how representations evolve during training.
    Building on this perspective, we introduce \textit{Target Linearity}, a measure quantifying the linear alignment between features and targets. 
    By analyzing the training and layer-wise dynamics, we show that deep networks learn to \textit{sequentially} transform representations toward target-linear structure. 
    This linearization perspective provides a unified interpretation of several empirical phenomena, including Neural Collapse and linear interpolation in generative models.
    Code is available at \url{https://github.com/cth127/GramLin}.
\end{abstract}

\section{Introduction}
\label{sec1:intro}

Feature learning is central to the success of deep neural networks, yet a unified account of how representations evolve during training remains limited. 
Existing work falls into two directions. 
General theories of training dynamics are largely parameter-centric, as in kernel-based analyses where features remain effectively fixed~\citep{jacot2018neural}. 
In contrast, recent work studies feature learning through analytically tractable case studies --- e.g., single and multi-index models, staircase, polynomials, and teacher-student setups --- showing that networks can provably learn task-relevant representations~\citep{abbe2022merged, ba2022high, damian2022neural, dandi2025computational}.
However, these results rely on specific problem structures, leaving open how feature learning behaves in general, realistic, high-dimensional settings. 

A natural way to address this gap is to analyze training directly in feature space.
As a thought experiment, suppose that instead of updating weights, we directly update hidden features by following the loss gradient.
Concretely, fix the input $x$ and define a virtual trajectory $\tilde{x}_t$ that accumulates the feature-space gradient at each training step:
\[
\tilde{x}_{t+1} \leftarrow \tilde{x}_t - \gamma \nabla_x \mathcal{L}_t, \qquad \tilde{x}_0 = x,
\]
where $\nabla_x \mathcal{L}_t$ is computed under the network with parameters $\theta_t$ with usual gradient descent update (GD) at step $t$.
We call $\tilde{x}_t$ the \textit{virtual update} of $x$.
Note that $\tilde{x}_t$ does not affect the actual training trajectory; it tracks, in feature space, the direction GD would move features if features were the optimization variables.
\Cref{fig:swissroll} shows that the virtual update progressively unrolls Swiss roll data into a linear form.
This raises a natural question: although GD operates in weight space, does it implicitly induce this feature-space evolution?
Answering it requires a tool that connects weight-space updates to feature-space dynamics.

Such a tool follows directly from the chain rule.
We begin by introducing a simple identity relating gradients in weight space to gradients in input (or hidden) space.
For any function of the form $f(z)$ with $z = Wh$, the chain rule yields
\[
\nabla_h f = W^\top \nabla_z f \quad \text{and} \quad \nabla_W f = \nabla_z f \cdot h^\top 
\]
Combining these gives
\begin{equation}
    \nabla_h f \cdot h^\top = W^\top \nabla_W f.
    \label{eq:fle}
\end{equation}
The left-hand side describes how features evolve; the right-hand side describes how weights update. The equation allows us to translate between the two, and in particular, it identifies the weight Gram matrix $W^\top W$ as the natural object connecting them.
We refer to this exact identity as the \textit{Feature Learning Equation}, and use it throughout the paper to analyze feature evolution.

In \Cref{sec2:gram}, we examine the role of the weight Gram matrix more closely and present empirical evidence that it carries the dominant signal of feature learning.
This enables us to analyze \textit{what the weight Gram matrix learns} and, in turn, \textit{how hidden features evolve} during training --- i.e., \textit{feature learning}.
Building on this perspective, we make four theoretical contributions:
\begin{itemize}
    \item \Cref{sec3:vc}: We prove that the weight Gram matrix learns the \textit{Virtual Covariance}, which captures the covariance structure of virtually updated hidden states.
    \item \Cref{sec4.sub1:tl}: We introduce \textit{Target Linearity}, a novel quantity measuring the linear relationship between hidden states and target labels, where the Gram structure appears again.
    \item \Cref{sec4.sub2:dynamics}: We characterize the training and layer-wise dynamics of Target Linearity. The training dynamics reveal that feature learning is a moving-target problem in which the network chases an evolving linear predictor, in contrast to the Neural Tangent Kernel regime (NTK, \citep{jacot2018neural}) where the linear model is fixed. The layer-wise dynamics show that trained networks \textit{sequentially} linearize input data with respect to the target, consistent with prior empirical findings on \textit{linear probes}~\citep{alain2016understanding}.
    \item \Cref{sec5:other}: We show this linearization perspective can explain other feature learning phenomena. We show Neural Collapse~\citep{papyan2020prevalence} is an extreme case of target linearization, and that high Target Linearity is a sufficient condition for the linear interpolation property observed in the latent space of generative models~\citep{shao2018riemannian}, demonstrating the generality of our framework.
\end{itemize}

\begin{figure}[t]
    \centering
    \includegraphics[width=0.98\linewidth]{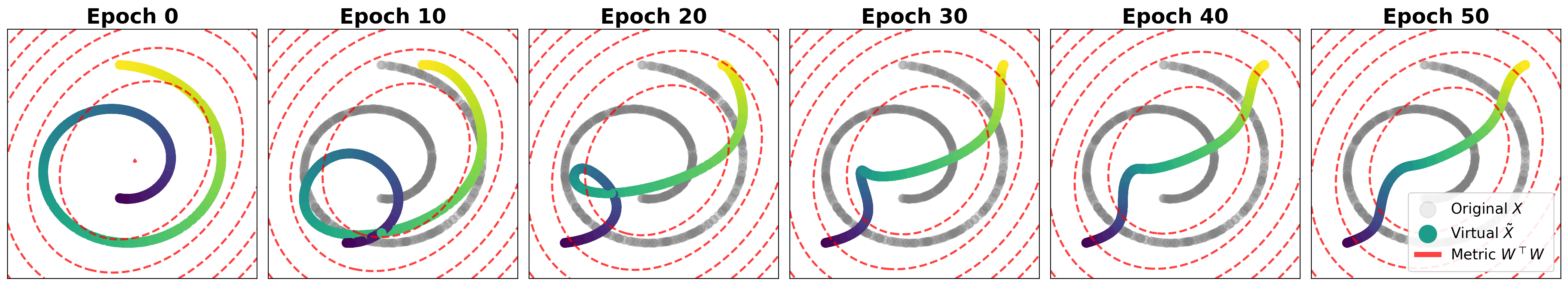}
    \caption{
    The virtual update $\tilde{X}_{t+1} \leftarrow \tilde{X}_t - \gamma \nabla_X \mathcal{L}_t$ progressively unrolls a Swiss roll into a near-linear curve along the target (encoded by color), shown across training epochs.
    Gray points denote the original input $X$; colored points denote the virtually updated $\tilde{X}$.
    The red dashed contours show the geometry induced by $W^\top W$ at the first layer --- a connection developed in \Cref{sec3:vc}.
    }
    \label{fig:swissroll}
\end{figure}

\section{Weight Gram Matrix as a Bridge}
\label{sec2:gram}

We now show how \Cref{eq:fle} surfaces when analyzing the weight Gram matrix.
Computing the Gram matrix of the gradient-descent-updated weight matrix yields
\begin{equation}
    \label{eq:gram}
    \begin{aligned}
        (W - \gamma \nabla_W \mathcal{L})^\top (W - \gamma \nabla_W \mathcal{L}) 
        &= W^\top W - \gamma (W^\top \nabla_W \mathcal{L} + \nabla_W \mathcal{L}^\top W) + O(\gamma^2)\\
        &\approx W^\top W - \gamma (\nabla_h \mathcal{L} \cdot h^\top + h \cdot \nabla_h \mathcal{L}^\top).
    \end{aligned}
\end{equation}
where $\mathcal{L}$ is the loss function and $\gamma$ is the learning rate.
This identity makes a key point clear: \textit{by taking the Gram of the weight matrix, we can directly access feature-space gradients and, through them, feature evolution}.

The weight Gram matrix is not an entirely novel object in the feature learning literature.
\citet{radhakrishnan2024mechanism} recently proposed the \textit{Neural Feature Ansatz} (NFA) as a step toward understanding feature learning in neural networks.
They showed that the weight Gram matrix at each layer is closely related to the gradient of the network output with respect to that layer's input features.
For a network $f$ with $l$-th layer weight matrix $W_l$ and corresponding input representation $h_l$, NFA conjectures
\[
W_l^\top W_l \propto \mathbb{E}\left[ \nabla_{h_l} f \cdot \nabla_{h_l} f^\top \right],
\]
where the right-hand side is termed the \textit{Average Gradient Outer Product} (AGOP).
This relationship has been established theoretically in low-rank matrix completion, a widely studied setting for feature learning~\cite{radhakrishnan2025linear}, and has also been validated empirically across a broad range of architectures and datasets.

The discussion so far establishes the weight Gram matrix as a natural object in feature learning analysis, both theoretically (via the FL equation) and through prior empirical observation (via NFA).
A sharper question remains: Can the weight Gram matrix sufficiently capture the feature learning dynamics?
We test this directly by isolating the Gram-modifying component of the gradient update.

Specifically, we consider an update direction $\Delta'$ that leaves the weight Gram matrix unchanged, found by solving
\[
\arg\min_{\Delta'} \| \nabla_W \mathcal{L} - \Delta' \|_F \quad \text{subject to} \quad W^\top \Delta' + \Delta'^\top W = 0.
\]
Setting $\Delta =  \nabla_W \mathcal{L} - \Delta'$ and updating $W^+ = W - \gamma \Delta$ yields
\[
(W^+)^\top W^+ = (W - \gamma \nabla_W \mathcal{L})^\top (W - \gamma \nabla_W \mathcal{L}) + O(\gamma^2),
\]
so the Gram matrix evolves identically to that of standard GD, while the components of the gradient that leave the Gram unchanged are removed.
We refer to $\Delta$ as the \textit{whitened gradient}.
If the weight Gram matrix carries the learning signal, training with $\Delta$ should recover the performance of standard GD; if it does not, performance should largely degrade.

\begin{figure}
    \centering
    \includegraphics[width=0.98\linewidth]{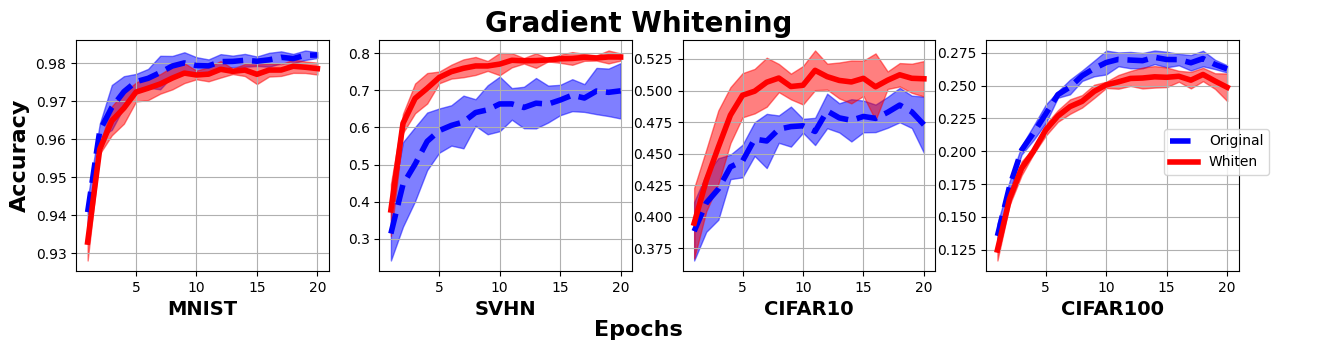}
    \caption{Test accuracy of standard gradient descent (\textcolor{blue}{Original}, dashed) versus the whitened update (\textcolor{red}{Whiten}, solid) on MNIST, SVHN, CIFAR-10, and CIFAR-100, plotted across 20 training epochs. Shaded regions denote standard deviation across 5 runs.}
    \label{fig:whitening}
\end{figure}

We train three-layer fully connected networks on standard image classification benchmarks \citep{krizhevsky2009learning, lecun2002gradient, netzer2011reading} using both standard GD and the whitened update.
\Cref{fig:whitening} reports the results.
On MNIST, SVHN, and CIFAR-10, the whitened update matches or exceeds the performance of standard GD; on SVHN and CIFAR-10 specifically, it reaches GD's best performance within four to five epochs and shows lower variance across runs on SVHN. 
On CIFAR-100, it tracks standard GD with a small gap.
The details for computing $\Delta$ are described in \Cref{a2.sub0:whiten}.
Comparable results for Convolutional architectures~\citep{krizhevsky2012imagenet} are deferred to \Cref{a2.sub1:whiten_cnn}.

These results indicate that the Gram-modifying component of the gradient is largely sufficient to recover standard training performance: the residual component, while present in the gradient, contributes little to feature learning beyond what the Gram update already provides.
This empirically substantiates the central role of the weight Gram matrix and motivates the analyses developed in the remainder of the paper, where we ask what the Gram matrix learns (\Cref{sec3:vc}) and what its evolution achieves at the level of features (\Cref{sec4:tl}).

\section{Weight Gram Learns Virtual Covariance}
\label{sec3:vc}

\Cref{sec2:gram} established the weight Gram matrix as a natural and empirically central object in feature learning analysis.
We now ask: \textit{what does the weight Gram matrix learn during training?}
We show that each GD step shifts the weight Gram matrix by an amount that captures the covariance structure of \textit{virtually updated} hidden states --- the same virtual update introduced in \Cref{sec1:intro}.
This formalizes the visualization in \Cref{fig:swissroll}: the Gram matrix tracks how features would evolve if features themselves were the optimization variables.

\paragraph{Settings} 

We work with a standard $L$-layer fully connected network $f(x) = w^\top h_L$, where $h_l \in \mathbb{R}^{d_l}$ denotes the $l$-th hidden state, defined recursively as $h_l = \sigma(W_{l-1} h_{l-1})$ with $h_0 = x \in \mathbb{R}^{d_0}$ and $\sigma$ a nonlinear activation function applied element-wise.
The vector $w \in \mathbb{R}^{d_L}$ serves as the final predictor, and $W_l \in \mathbb{R}^{d_{l+1} \times d_l}$ are the weight matrices.
For a loss function $\mathcal{L}$, we write $\nabla_{W_l} \mathcal{L} \in \mathbb{R}^{d_{l+1} \times d_l}$ for the gradient with respect to $W_l$.

\begin{definition}
The \textbf{virtual update} of a hidden state $h_l$ (or of the input $h_0 = x$) with learning rate $\gamma$ is $h^+_l = h_l - \gamma \nabla_{h_l} \mathcal{L}$.
The \textbf{actual update} of a weight matrix $W_l$ is $W^+_l = W_l - \gamma \nabla_{W_l} \mathcal{L}$, the standard GD step.
The \textbf{Virtual Covariance} (VC) is $\widetilde{\mathrm{cov}}(h^+_l) = h^+_l \cdot (h^+_l)^\top$, and the \textbf{Virtual Covariance Shift} (VCS) is $\widetilde{\mathrm{cov}}(h^+_l) - \widetilde{\mathrm{cov}}(h_l)$.
\end{definition}

The virtual update represents how the input to the $l$-th layer would change under GD if hidden states themselves were the optimization variables.
We use the term \textit{virtual} because GD updates weights, not hidden states directly.
The VCS therefore characterizes how the input covariance structure would evolve to reduce the loss.\footnote{Strictly, $\widetilde{\mathrm{cov}}$ is not a covariance matrix since $h_l^+$ and $h_l$ are not guaranteed to be centered; with standard normalization the two coincide.}

\paragraph{Weight Gram Dynamics}

The following theorem, our main result for this section, shows that the actual GD update on the weights produces precisely this VCS on the Gram matrix.

\begin{theorem}
    \label{thm:vcs}
    The shift in the Gram matrix of the gradient-descent-updated weight matrix equals the Virtual Covariance Shift up to $O(\gamma^2)$:
    \begin{equation*}
        (W^+_l)^\top W^+_l - W_l^\top W_l = \widetilde{\mathrm{cov}}(h^+_l) - \widetilde{\mathrm{cov}}(h_l) + O(\gamma^2).\footnote{For a mini-batch update $W^+_l = W_l - \gamma \sum_i \nabla_{W_l} \mathcal{L}_i$, the Gram shift accumulates the per-sample VCS, $(W^+_l)^\top W^+_l - W_l^\top W_l \approx \sum_i [\widetilde{\mathrm{cov}}(h^+_{l,i}) - \widetilde{\mathrm{cov}}(h_{l,i})]$.}
    \end{equation*}
\end{theorem}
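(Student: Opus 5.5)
Both sides of the claimed identity are quadratic in $\gamma$, so the plan is to expand each one and show that their first-order terms agree. The only link needed between them is the Feature Learning Equation \eqref{eq:fle}, applied to the $l$-th layer.

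\textbf{Left-hand side.} Substituting $W^+_l = W_l - \gamma \nabla_{W_l}\mathcal{L}$ gives
\[
(W^+_l)^\top W^+_l - W_l^\top W_l = -\gamma\bigl(W_l^\top \nabla_{W_l}\mathcal{L} + (\nabla_{W_l}\mathcal{L})^\top W_l\bigr) + \gamma^2 (\nabla_{W_l}\mathcal{L})^\top \nabla_{W_l}\mathcal{L}.
\]
The last term is $O(\gamma^2)$, exactly as in \Cref{eq:gram}.

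\textbf{Right-hand side.} Using $h^+_l = h_l - \gamma \nabla_{h_l}\mathcal{L}$,
\[
\widetilde{\mathrm{cov}}(h^+_l) - \widetilde{\mathrm{cov}}(h_l) = -\gamma\bigl(\nabla_{h_l}\mathcal{L}\, h_l^\top + h_l\, (\nabla_{h_l}\mathcal{L})^\top\bigr) + \gamma^2 \nabla_{h_l}\mathcal{L}\,(\nabla_{h_l}\mathcal{L})^\top .
\]
Again the last term is $O(\gamma^2)$.

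\textbf{Matching the first-order terms.} Write $z_l = W_l h_l$ for the pre-activation, so that $h_{l+1} = \sigma(z_l)$. The loss depends on $h_l$ and $W_l$ only through $z_l$, so \eqref{eq:fle} applies with $f = \mathcal{L}$, $z = z_l$, $W = W_l$, $h = h_l$. This gives the exact identity
\[
\nabla_{h_l}\mathcal{L}\, h_l^\top = W_l^\top \nabla_{W_l}\mathcal{L}.
\]
Transposing it yields $h_l(\nabla_{h_l}\mathcal{L})^\top = (\nabla_{W_l}\mathcal{L})^\top W_l$. The two symmetrized first-order terms therefore coincide exactly, and subtracting the two expansions leaves only the $\gamma^2$ remainders. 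This proves the theorem.

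\textbf{Potential obstacles.} The only step that needs care is the applicability of \eqref{eq:fle}. The gradients must be taken for a single sample, since the per-sample gradient $\nabla_{W_l}\mathcal{L}$ is the rank-one outer product $\nabla_{z_l}\mathcal{L}\, h_l^\top$. The gradients must also be evaluated at the same parameters $\theta$. Both conditions hold by the definitions. For the mini-batch footnote, linearity in $i$ of the first-order terms gives the summed per-sample VCS, with the cross terms absorbed into $O(\gamma^2)$. Finally, "$O(\gamma^2)$" should be read with gradients held fixed: the remainders are explicit, $\gamma^2(\nabla_{W_l}\mathcal{L})^\top\nabla_{W_l}\mathcal{L} - \gamma^2\nabla_{h_l}\mathcal{L}(\nabla_{h_l}\mathcal{L})^\top$, and are bounded by $\gamma^2$ times squared gradient norms.
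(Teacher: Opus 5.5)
Your proposal is correct and follows the paper's proof: expand both the weight-Gram shift and the Virtual Covariance Shift to first order in $\gamma$, then identify the linear terms using the Feature Learning Equation applied at $z = W_l h_l$. Your explicit $\gamma^2$ remainders and your handling of the mini-batch footnote simply spell out the paper's one-line argument.
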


The proof is direct from \Cref{eq:gram}, as $\widetilde{\mathrm{cov}}(h^+_l) - \widetilde{\mathrm{cov}}(h_l) = - \gamma (\nabla_h \mathcal{L} \cdot h^\top + h \cdot \nabla_h \mathcal{L}^\top) + O(\gamma^2)$.
We show this also holds for Convolutional neural network~\citep{krizhevsky2012imagenet} in \Cref{a2.sub0:vcs_cnn}.

The $i$-th column $w_i$ of $W$ determines how the $i$-th input entry $x_i$ influences the next layer $h = \sigma(Wx)$.
The Gram entry $(W^\top W)_{ij} = w_i^\top w_j$ therefore measures the correlation between the effects of $x_i$ and $x_j$ on $h$.
In this sense, the weight Gram matrix is an \textit{input covariance with respect to its influence on the next-layer representation}, while the VCS describes how this covariance \textit{should} evolve to reduce the loss.
\Cref{thm:vcs} shows that GD on the weights aligns the two: the actual Gram update tracks the desired covariance shift, even though hidden states are never updated directly.
The red dashed contours in \Cref{fig:swissroll} make this evolution visible: each contour shows the density of a Gaussian with covariance $W^\top W$ at the first layer, adapting with the virtual unrolling of the spiral.

This result clarifies the relationship to the Neural Feature Ansatz of \citet{radhakrishnan2024mechanism}, which conjectures $W^\top W \propto \tfrac{1}{N} \sum_{i=1}^N \nabla_{h_i} f \cdot \nabla_{h_i} f^\top$ (AGOP).
An AGOP-like term --- with the network output replaced by the loss --- appears within the $\gamma^2$ residual of \Cref{thm:vcs}.
The VCS therefore refines the AGOP correspondence by capturing the leading-order Gram update directly.
Empirically, the VCS tracks the actual Gram update at $\rho = 0.98$, compared to $\rho = 0.71$ for AGOP (See \Cref{a1.sub2:vsagop}).

\paragraph{Relationship between Virtual vs. Actual Update}

\Cref{thm:vcs} establishes that the weight Gram matrix learns the covariance structure of \textit{virtually} updated features.
A natural question is whether this is meaningful in practice: does the \textit{actually} updated hidden state $\sigma(W^+_{l-1} h_{l-1})$ behave similarly to the virtual update $h_l^+$?
If actual and virtual updates diverge, $W_l$ may be optimized to expect features that the previous layer never produces.
The following proposition, stated for the single-sample case, shows the two are aligned in both sign and magnitude.

\begin{proposition}
\label{prop:vc_alignment}
Assume $\sigma$ is $L$-Lipschitz and $\|h_{l-1}\|_2 = 1$, and let $W^+_{l-1}$ denote the layer $l{-}1$ weight matrix after one gradient descent step on the single data point producing $h_{l-1}$, with all other layers held fixed.
Then $|\sigma(W^+_{l-1} h_{l-1}) - \sigma(W_{l-1} h_{l-1})| \leq L^2 |h_l^+ - h_l|$ element-wise.
If $\sigma$ is additionally monotonically increasing, then $\mathrm{sgn}(\sigma(W^+_{l-1} h_{l-1}) - \sigma(W_{l-1} h_{l-1})) = \mathrm{sgn}(h_l^+ - h_l)$; the sign equality also holds for non-decreasing $\sigma$ such as ReLU under the convention $\mathrm{sgn}(0) \in \{-1, +1\}$.
\end{proposition}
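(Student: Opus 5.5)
Write $z = W_{l-1} h_{l-1}$, so $h_l = \sigma(z)$, and let $g = \nabla_{h_l}\mathcal{L}$. The plan is to compute both the actual and the virtual change of $h_l$ in terms of $g$, then compare them coordinate by coordinate.

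\emph{Actual update.} By the chain rule (as in the derivation of \Cref{eq:fle}), $\nabla_z \mathcal{L} = \sigma'(z)\odot g$ and $\nabla_{W_{l-1}}\mathcal{L} = (\sigma'(z)\odot g)\, h_{l-1}^\top$. Hence
\[
W^+_{l-1} h_{l-1} = z - \gamma (\sigma'(z)\odot g)\, h_{l-1}^\top h_{l-1} = z - \gamma\, \sigma'(z)\odot g,
\]
where the normalization $\|h_{l-1}\|_2 = 1$ is used to collapse the outer product. So the pre-activation of coordinate $i$ moves by $\delta_i = -\gamma\, \sigma'(z_i) g_i$. For nondifferentiable $\sigma$ such as ReLU, I take $\sigma'$ to be the almost-everywhere derivative used by backpropagation, with $|\sigma'| \le L$.

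\emph{Virtual update.} By definition, $h_l^+ - h_l = -\gamma g$.

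\emph{Magnitude bound.} Applying the Lipschitz property of $\sigma$ coordinatewise,
\[
|\sigma(z_i+\delta_i) - \sigma(z_i)| \le L|\delta_i| = L\gamma|\sigma'(z_i)||g_i| \le L^2 \gamma |g_i| = L^2 |(h_l^+ - h_l)_i|.
\]

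\emph{Sign equality.} If $\sigma$ is monotonically increasing, then $\sigma' \ge 0$. Whenever $\sigma'(z_i) > 0$, $\delta_i$ has the sign of $-g_i$, and strict monotonicity makes $\sigma(z_i+\delta_i)-\sigma(z_i)$ carry the sign of $\delta_i$, hence the sign of $-\gamma g_i$, which matches $(h_l^+ - h_l)_i$. The degenerate cases are the main obstacle.
\begin{itemize}
\item When $\sigma'(z_i) = 0$ (for ReLU, $z_i<0$), or when $\sigma$ is only non-decreasing, the actual change is zero and the virtual change need not be. This is exactly why the statement adopts the convention $\mathrm{sgn}(0)\in\{-1,+1\}$: zero is declared compatible with either sign.
\item When $g_i = 0$, both changes vanish and the signs trivially agree.
\item For non-decreasing $\sigma$ in general, a move of $z_i$ in the direction of $-g_i$ changes $\sigma$ either by zero or in that same direction. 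So the only possible mismatch is a zero, which the convention absorbs.
\end{itemize}
I would handle these boundary cases explicitly and note that no Taylor expansion is needed. The argument is exact for every $\gamma>0$ because the pre-activation update is exactly $-\gamma\,\sigma'(z)\odot g$.
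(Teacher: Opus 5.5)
Your proof is correct and follows essentially the same route as the paper. Both use $\|h_{l-1}\|_2=1$ to get the exact pre-activation update $z^+ = z - \gamma\,\sigma'(z)\odot\nabla_{h_l}\mathcal{L}$, then apply the Lipschitz bound with $|\sigma'|\le L$, and carry the sign through monotonicity. Your explicit handling of the cases $\sigma'(z_i)=0$ (which need the $\mathrm{sgn}(0)$ convention even when $\sigma$ is strictly increasing) and your use of non-strict inequalities are more careful than the paper's write-up, which silently drops these points and writes strict inequalities.
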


\Cref{prop:vc_alignment} shows that the actual update on the hidden state agrees with the virtual update in direction and is bounded by it in magnitude.
For 1-Lipschitz activations such as ReLU, the magnitude bound reduces to $|h_l^+ - h_l|$ directly.
The proof is given in \Cref{a2.sub1:vc_alignment}.

Together, \Cref{thm:vcs} and \Cref{prop:vc_alignment} characterize what the weight Gram matrix learns: it tracks the VC structure that GD would impose on features directly, and the resulting features are realized at the next layer in a controlled way.
\Cref{fig:swissroll} hints this evolution has a striking geometric consequence --- virtually updated features become progressively linear with respect to the target.
We turn now to formalizing this observation as \textit{Target Linearity}.

\section{Weight Gram Captures Target Linearity}
\label{sec4:tl}

We now formalize the linearization phenomenon hinted in \Cref{sec3:vc}.
We introduce \textit{Target Linearity}, a quantitative measure of how well a linear model can predict the target from learned features, and analyze its dynamics during training and across layers.
A central technical step is to identify a tractable surrogate that exposes the weight Gram matrix as the object that captures this linearization.

\subsection{Target Linearity and its Surrogate}
\label{sec4.sub1:tl}

\begin{definition}
For a data matrix $H = [h_{1}, \dots, h_{N}] \in \mathbb{R}^{d_l \times N}$ and target vector $Y = [y_1, \dots, y_N]^\top \in \mathbb{R}^N$, we define the \textbf{Target Linearity} (TL) as the coefficient of determination ($R^2$) of (ridge) linear regression: for $\lambda \geq 0$,
\[
\mathcal{T}_\lambda(H, Y) = 1 - \frac{\|Y - \hat{Y}_\lambda\|_2^2}{\|Y - \bar{Y}\|_2^2},
\]
where $\hat{Y}_\lambda = H^\top (HH^\top + \lambda I)^{-1} H Y$, $\bar{Y} = \bar{y} \cdot \mathbf{1}_N$, and $\bar{y} = \frac{1}{N} \sum_{i=1}^N y_i$.
\end{definition}

At $\lambda = 0$ (and assuming $HH^\top$ is invertible), TL recovers the $R^2$ of ordinary least squares.
Since TL is defined as the $R^2$ score of linear regression, it quantifies how well the target can be explained by a linear function of the features.

The features $H$ appear in the squared error $\|Y - \hat{Y}_\lambda\|_2^2$ through multiple occurrences, making the dependence difficult to analyze directly.
Applying the Woodbury identity yields $\|Y - \hat{Y}_\lambda\|_2^2 = \lambda^2 \, Y^\top (\lambda I + H^\top H)^{-2} Y$, which collapses the dependence on $H$ into a single Gram-like term $H^\top H$.
We define the Gram-induced error function $\mathcal{E}_\lambda(G^l_\sigma) = Y^\top (\lambda I + G^l_\sigma)^{-2} Y$, where $G^l_\sigma = \sigma(W_{l} H_{l})^\top \sigma(W_{l} H_{l})$.
We drop the superscript $l$ when the layer is clear from context.

However, $\mathcal{T}_\lambda$ still involves a squared matrix inverse and the nonlinearity $\sigma$ inside $G_\sigma$, both of which obstruct direct analysis.
To make the dependence on the Gram matrix tractable, we introduce a \textit{surrogate} $\mathcal{S}(G) = Y^\top G Y$.
Here $G$ may denote either the nonlinear Gram $G_\sigma = \sigma(WH)^\top \sigma(WH)$ or the \textit{linearized} Gram $G_{\mathrm{id}} = H^\top W^\top W H$, in which the activation has been removed.
The surrogate admits the following lower-bound relation to TL.

\begin{theorem}
    \label{thm:surrogate}
    Let $e_0 = \|Y - \bar{Y}\|_2^2$ and $e_1 = \|Y\|_2$, and assume $\|W\|_{\mathrm{op}} \leq c_0$, $\|H\|_F \leq c_1$, and $N > d$.
    Then
    \[
    \mathcal{T}_\lambda(H, Y) \geq 1 - C \cdot \mathcal{S}(G_{\mathrm{id}})^{-1},
    \]
    for some fixed constant $C$.
\end{theorem}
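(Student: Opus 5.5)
The plan is to pass from $\mathcal{T}_\lambda$ to the Woodbury form of the residual and then compare both the error term and the surrogate to $\|Y\|_2^2 = e_1^2$ through a spectral decomposition of the Gram matrix. Throughout I read the features entering $\mathcal{T}_\lambda$ as the linearized next-layer features $WH$, so that the relevant Gram matrix is $G_{\mathrm{id}} = H^\top W^\top W H$, and I take $\lambda > 0$. The case $\lambda>0$ is the meaningful one, since $N > d$ forces $G_{\mathrm{id}}$ to be singular; at $\lambda = 0$ one would have to work with the pseudo-inverse instead.

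\textbf{Step 1 (Woodbury form).} Using the identity $\|Y - \hat{Y}_\lambda\|_2^2 = \lambda^2 Y^\top(\lambda I + G_{\mathrm{id}})^{-2} Y$ stated before the theorem, write
\[
1 - \mathcal{T}_\lambda = \frac{\lambda^2\, \mathcal{E}_\lambda(G_{\mathrm{id}})}{e_0}.
\]
It therefore suffices to show $\lambda^2 \mathcal{E}_\lambda(G_{\mathrm{id}}) \le C' \, \mathcal{S}(G_{\mathrm{id}})^{-1}$ for some constant $C'$, and then set $C = C'/e_0$.

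\textbf{Step 2 (spectral bounds).} Diagonalize $G_{\mathrm{id}} = \sum_k \mu_k u_k u_k^\top$ with $\mu_k \ge 0$, and set $a_k = (u_k^\top Y)^2$, so that $\sum_k a_k = e_1^2$. Then
\[
\lambda^2 \mathcal{E}_\lambda = \sum_k \frac{\lambda^2 a_k}{(\lambda+\mu_k)^2} \le \sum_k a_k = e_1^2 .
\]
For the surrogate,
\[
\mathcal{S}(G_{\mathrm{id}}) = \sum_k \mu_k a_k \le \|G_{\mathrm{id}}\|_{\mathrm{op}}\, e_1^2 \le \|W\|_{\mathrm{op}}^2 \|H\|_{\mathrm{op}}^2 e_1^2 \le c_0^2 c_1^2 e_1^2 ,
\]
where the last step uses $\|H\|_{\mathrm{op}} \le \|H\|_F$. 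Hence $1 \le c_0^2 c_1^2 e_1^2\, \mathcal{S}^{-1}$ whenever $\mathcal{S} > 0$. If $\mathcal{S} = 0$, the right-hand side of the theorem is $-\infty$ and the claim is trivial.

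\textbf{Step 3 (combine).} Chaining the two bounds gives
\[
1-\mathcal{T}_\lambda \le \frac{e_1^2}{e_0} \le \frac{c_0^2 c_1^2 e_1^4}{e_0}\, \mathcal{S}(G_{\mathrm{id}})^{-1},
\]
so the theorem holds with $C = c_0^2 c_1^2 e_1^4 / e_0$, a constant depending only on $Y$ and the norm bounds.

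\textbf{Expected obstacle.} The main risk is not technical but lies in making this bound less crude: the argument above never uses the interplay between large $\mu_k$ and large $a_k$. If a sharper constant is wanted, I would first try a Cauchy--Schwarz or Jensen argument on the weights $a_k/e_1^2$. Applying Jensen to the convex function $\mu \mapsto \lambda^2/(\lambda+\mu)^2$ only bounds $\mathcal{E}_\lambda$ from below, so an upper bound would instead need the rank restriction coming from $N > d$. For the statement as given, the crude bound above suffices.
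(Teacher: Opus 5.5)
Your proof is correct, and it takes a different, more elementary route than the paper. The paper works with the Gram matrix of $H$ itself: the statement is about $\mathcal{T}_\lambda(H,Y)$, so the Woodbury form involves $H^\top H$, not $G_{\mathrm{id}}$. It then proceeds in three steps:
\begin{itemize}
\item it bounds $Y^\top(\lambda I+H^\top H)^{-2}Y$ by $\kappa\,\|Y\|_2^4\,[Y^\top(\lambda I+H^\top H)Y]^{-1}$ via the Kantorovich inequality;
\item it drops the $\lambda\|Y\|_2^2$ term;
\item it passes to $G_{\mathrm{id}}$ through the operator-norm comparison $\mathcal{S}(G_{\mathrm{id}})\le c_0^2\,Y^\top H^\top H Y$.
\end{itemize}
The hypothesis $N>d$ is used only to set $\lambda_{\min}=0$ in $\kappa$, and $\|H\|_F\le c_1$ only to bound $\lambda_{\max}$. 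This gives $C=c_0^2\|Y\|_2^4(2\lambda+c_1^2)^2/(4\lambda e_0)$. You instead pair the trivial residual bound $\lambda^2\mathcal{E}_\lambda\le\|Y\|_2^2$ with the trivial ceiling $\mathcal{S}(G_{\mathrm{id}})\le c_0^2c_1^2\|Y\|_2^2$.

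Your reinterpretation of the features as $WH$ is unnecessary. The inequality $\lambda^2Y^\top(\lambda I+G)^{-2}Y\le\|Y\|_2^2$ holds for every positive semidefinite $G$, in particular for $H^\top H$, so your argument proves the statement exactly as written.

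Your route gains several things:
\begin{itemize}
\item It never uses $N>d$.
\item Your constant does not depend on $\lambda$, and the conclusion survives at $\lambda=0$, where $\hat Y_0$ is an orthogonal projection and the residual is still at most $\|Y\|_2$. The paper's constant blows up as $\lambda\to 0$, which is the value used in its experiments.
\item Since $(2\lambda+c_1^2)^2\ge 8\lambda c_1^2$ by AM--GM, the paper's constant is always at least twice yours.
\end{itemize}
It also shows how weak the statement is. Because $\mathcal{S}(G_{\mathrm{id}})\le c_0^2c_1^2\|Y\|_2^2$ and $e_0\le\|Y\|_2^2$, your right-hand side never exceeds $1-\|Y\|_2^2/e_0\le 0$. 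So neither version says more than the trivial bound $\mathcal{T}_\lambda\ge 1-\|Y\|_2^2/e_0$.

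The paper's Kantorovich lemma would only pay off when $\lambda I+G$ is well conditioned. Regarding your closing remark, the rank restriction $N>d$ cannot help an upper bound. It guarantees $\ker(H^\top H)\neq\{0\}$, and the component of $Y$ in that kernel enters $\lambda^2\mathcal{E}_\lambda$ undiminished. It is also exactly what forces $\lambda_{\min}=0$ and degrades the paper's constant.
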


\Cref{thm:surrogate} yields two benefits.
First, the surrogate removes both the nonlinearity and the squared matrix inverse that obstruct direct analysis of $\mathcal{T}_\lambda$.
Second, $\mathcal{S}(G_{\mathrm{id}}) = Y^\top H^\top W^\top W H Y$ exposes the weight Gram matrix $W^\top W$ explicitly, connecting the analysis of TL to the results of \Cref{sec3:vc}.
The proof is given in \Cref{a2.sub2:surrogate}.
We now use this surrogate to study how how feature evolves during training and across layers.

\subsection{Training \& Layer-wise Dynamics}
\label{sec4.sub2:dynamics}

\paragraph{Training Dynamics}

\begin{figure}
    \centering
    \includegraphics[width=0.98\linewidth]{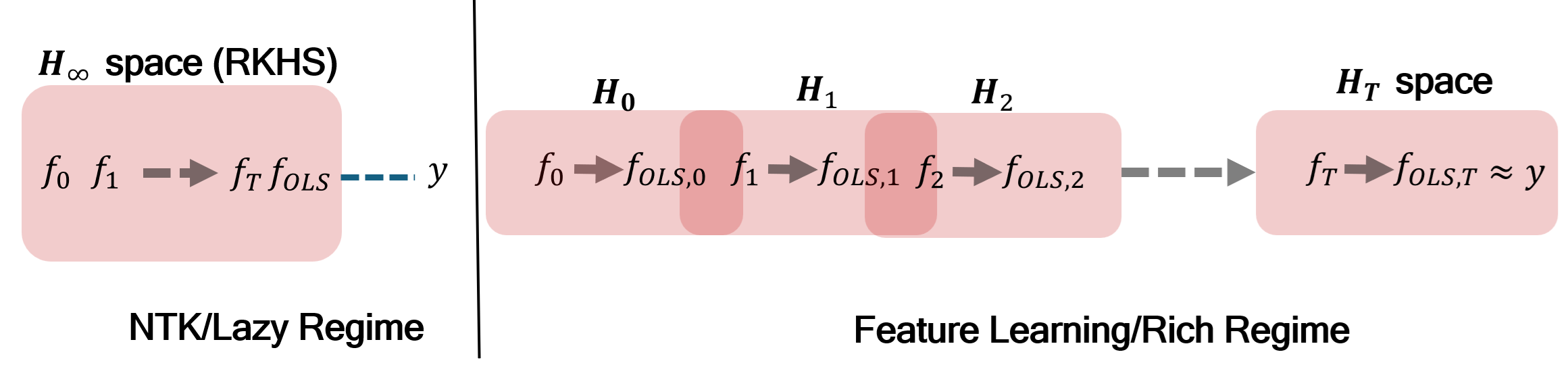}
    \caption{
    Training dynamics in the lazy (left) versus rich regimes (right).
    In the lazy regime, the network remains close to its linearization, and the feature map $H_\infty$ is effectively fixed.
    In the rich regime, finite-width features $H_t$ evolve during training, inducing a moving target $f_{\mathrm{OLS},t}$ that $f_t$ chases, while feature updates simultaneously reshape $f_{\mathrm{OLS},t+1}$ toward $y$.
    }
    \label{fig:dynamics}
\end{figure}

We now analyze how $\mathcal{S}(G_{\mathrm{id}})$ evolves under GD.
Throughout this subsection, we largely depend on a first-order Taylor expansion to characterize the leading-order behavior and see what this approach reveals about dominant portion of feature dynamics.
We assume $f$ is $m$-positively homogeneous in $h$: $f(\alpha h) = \alpha^m f(h)$ for any positive scalar $\alpha$.
We further assess the validity of this approach empirically in settings that go beyond these assumptions, including GELU activations in \Cref{sec4.sub3:empirical}; further discussion and theoretical justification of our approach are in \Cref{a1.sub2:limit}.

\begin{theorem}
\label{thm:training}
For any loss function $\mathcal{L}$, let $G_{\mathrm{id}} = H^\top W^\top W H$ and $G_{\mathrm{id}}^+ = H^\top (W^+)^\top W^+ H$, where $W^+ = W - \gamma \nabla_W \mathcal{L}$.
If $f$ is $1$-positively homogeneous in $h$, the following holds:
\[
\mathcal{S}(G_{\mathrm{id}}^+) - \mathcal{S}(G_{\mathrm{id}}) \approx 2 \gamma \, (f^\top y) \cdot (y^\top K g),
\]
where $K_{ij} = h_i^\top h_j$, $f_i = f(h_i)$ are the predictions on the training set, and $g = -\nabla_f \mathcal{L}$.\footnote{The same result extends to general $m$-positively homogeneous case under the centered label assumption, $\sum_i y_i = 0$.}
\end{theorem}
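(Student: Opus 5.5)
The plan is to linearize the Gram update and rewrite the first-order term using the Feature Learning Equation (\Cref{eq:fle}) over the whole training set. Summing over samples, $\nabla_W \mathcal{L} = \sum_i \nabla_{z_i}\mathcal{L}\, h_i^\top$ with $z_i = Wh_i$. Expanding $(W^+)^\top W^+$ as in \Cref{eq:gram} gives
\[
\mathcal{S}(G^+_{\mathrm{id}}) - \mathcal{S}(G_{\mathrm{id}}) = -\gamma\, Y^\top H^\top\big(W^\top \nabla_W\mathcal{L} + \nabla_W\mathcal{L}^\top W\big) H Y + O(\gamma^2).
\]
The two terms are transposes of each other, so they contribute equally. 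The change is therefore $-2\gamma\, Y^\top H^\top W^\top \nabla_W \mathcal{L}\, H Y$ to first order, and this is the sense of ``$\approx$'' in the statement.

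Next I substitute the sum form of the gradient. Writing $u = HY = \sum_j y_j h_j$, we get $W^\top \nabla_W\mathcal{L}\, u = \sum_i W^\top \nabla_{z_i}\mathcal{L}\,(h_i^\top u)$. Using $W^\top\nabla_{z_i}\mathcal{L} = \nabla_{h_i}\mathcal{L}$ (the chain-rule half of \Cref{eq:fle}) and $h_i^\top u = \sum_j K_{ij} y_j = (Ky)_i$, this becomes $\sum_i (Ky)_i \nabla_{h_i}\mathcal{L}$. Pairing with $u^\top = Y^\top H^\top$ gives
\[
\Delta\mathcal{S} \approx -2\gamma \sum_i (Ky)_i\, \big(u^\top \nabla_{h_i}\mathcal{L}\big).
\]

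The remaining task is to evaluate $u^\top\nabla_{h_i}\mathcal{L}$, and this is where homogeneity enters. Since $\mathcal{L}$ depends on $h_i$ only through $f(h_i)$, we have $\nabla_{h_i}\mathcal{L} = -g_i \nabla_h f(h_i)$. This leaves $u^\top \nabla_h f(h_i) = \sum_j y_j\, h_j^\top \nabla_h f(h_i)$, and I expect this to be the main obstacle.

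Euler's identity for a $1$-homogeneous $f$ gives $h_i^\top\nabla f(h_i) = f(h_i)$, but only at the same point. The plan is to invoke the paper's first-order Taylor approximation: $h_j^\top \nabla f(h_i) \approx f(h_j)$. This follows from $f(h_j) \approx f(h_i) + (h_j - h_i)^\top \nabla f(h_i)$ combined with Euler's identity, which cancels the $f(h_i)$ terms exactly in the $1$-homogeneous case. Equivalently, $f$ is treated as locally linear, with its gradient shared across the samples.

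With this approximation, $u^\top\nabla f(h_i) \approx \sum_j y_j f_j = f^\top y$. This quantity is independent of $i$ and factors out of the sum, leaving
\[
\Delta\mathcal{S} \approx 2\gamma\,(f^\top y)\sum_i (Ky)_i g_i = 2\gamma\,(f^\top y)\,(y^\top K g),
\]
using the symmetry of $K$.

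For the footnote's $m$-homogeneous case, Euler's identity gives $m f(h_i)$. The Taylor step then leaves a residual $(m-1)f(h_i)\sum_j y_j$, and this residual vanishes when the labels are centered.
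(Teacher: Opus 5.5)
Your proposal is correct and follows essentially the same route as the paper. You linearize the Gram update via the Feature Learning Equation, merge the two transposed cross terms into a factor of $2$, write $\nabla_{h_k}\mathcal{L} = -g_k \nabla_{h_k} f$, and apply the same first-order Taylor/Euler approximation $h_j^\top \nabla_{h_i} f \approx f(h_j) + (m-1) f(h_i)$ (the paper's homogeneity lemma) to factor out $f^\top y$, including the centered-label handling of the $m$-homogeneous footnote. The only difference is notational: you package $\sum_j y_j h_j$ as $u = HY$, whereas the paper carries the triple index sum over $i,j,k$.
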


\begin{proof}[Proof Sketch]
We rewrite $(W^+)^\top W^+ - W^\top W$ using the FL equation (\Cref{eq:fle}), decompose $\nabla_h \mathcal{L} = -\nabla_h f \cdot g$, and resolve the cross terms $h_i^\top \nabla_{h_k} f$ via the $m$-homogeneity identity $h_i^\top \nabla_{h_k} f \approx f(h_i) + (m-1) f(h_k)$ (\Cref{lemma:homogeneous}).
The full proof is in \Cref{a2.sub3:training}.
\end{proof}

The term $f^\top y$ captures the empirical correlation between predictions and labels.
At initialization it is near zero, and as training proceeds it grows positive as $f$ aligns with $y$.

To build intuition for $y^\top K g$, consider the binary classification setting.
Under the binary cross-entropy loss with sigmoid predictions, $\hat{y}=\text{sigmoid}(f(x)) \in (0,1)^n$, and labels $y \in \{0,1\}^n$, the negative gradient takes the form $g = y - \hat{y}$, so $g_i > 0$ when $y_i = 1$ and $g_i < 0$ when $y_i = 0$.
Writing the quantity in feature space yields
\begin{align*}
    y^\top K g = (Hy)^\top (Hg) = \langle \sum_{i \in S_1} h_i, \; \sum_{j \in S_1}(1-\hat{y}_j)\, h_j - \sum_{k \in S_0} \hat{y}_k\, h_k \rangle,
\end{align*}
where $S_a = \{i : y_i = a\}$.
$y^\top K g$ measures the correlation, in feature space, between the positive-class centroid and a residual direction that pulls toward positive examples while pushing away from negative examples.
Under feature learning regime where representations cluster by class, as $\langle h_i, h_j\rangle \geq \langle h_i, h_k\rangle$ holds for $i,j \in S_1, k \in S_0$.
This makes the inner product positive whenever within-class similarity dominates cross-class similarity, before it tapers to zero as $\hat{y} \rightarrow y$.

A sharper interpretation of this gap comes from relating $y$ to a classical statistical object.
Let $f_{\mathrm{OLS}} = H^\top (HH^\top)^{-1} H y$ be the best linear predictor from current features.
A direct calculation gives $\|y - f_{\mathrm{OLS}}\|_K = 0$, so $y$ and $f_{\mathrm{OLS}}$ coincide under the $K$-norm.
As training proceeds, $f$ moves toward $y$ and as a result $f \rightarrow f_{\mathrm{OLS}}$ in $K$-norm, while $f_{\mathrm{OLS}}$ itself moves toward $y$ as TL increases.

This identifies feature learning as a \textit{moving-target problem}.
The network $f_t$ chases $f_{\mathrm{OLS},t}$, but $f_{\mathrm{OLS},t+1}$ itself is reshaped as features $H_{t+1}$ evolve: each gradient step closes the residual $\|f_{\mathrm{OLS},t} - f_t\|_{K_t}$ while simultaneously reopening it through feature updates that shift $f_{\mathrm{OLS},t+1}$ toward $y$.
This contrasts sharply with the NTK regime~\citep{jacot2018neural}, where, at infinite width, the kernel remains effectively fixed during training, and learning reduces to kernel regression in the corresponding RKHS~\citep{rahimi2007random}. 
In this regime, the solution is determined by the initial kernel, and training corresponds to fitting $y$ within this static function space, without additional parametrization such as $\mu P$~\citep{yang2021tuning}.
The lazy regime corresponds to this linearized setting in which the feature map remains effectively fixed, whereas in the feature learning regime the evolving representation $H_t$ induces a sequence of changing targets $f_{\mathrm{OLS},t}$, so that optimizing $f_t$ simultaneously reshapes the feature geometry itself (\Cref{fig:dynamics}). 
We empirically compare these two regimes in \Cref{a1.sub2:lazy}.

\paragraph{Layer-wise Dynamics}

The role of depth in neural networks has been empirically validated in many practical architectures~\citep{he2016deep, vaswani2017attention}, yet a unified theoretical account remains limited.
Most existing work studies depth from an approximation perspective, showing that function composition is fundamentally more expressive than shallow expansion~\citep{telgarsky2016benefits}.
This line of work asks what functions are \textit{learnable} with depth, rather than how a \textit{learned} deep network processes input across its layers.

For the derivation we assume the Neural Feature Ansatz~\citep{radhakrishnan2024mechanism} holds.
\citet{boix-adsera2026fact} discuss why the NFA usually holds and the degenerate cases under which it does not.
Under the NFA, we show that an interpolating (i.e., sufficiently trained) network \textit{sequentially linearizes} input data toward the target at the final linear readout.

\begin{theorem}
\label{thm:depth}
    Let $f_l: h_l \mapsto f(x)$ denote the subnetwork mapping the $l$-th hidden state to the final output, and assume the activation $\sigma$ is $m$-positively homogeneous.
    Assume the network interpolates the training data and $\exists C_0 > 0 \text{ s.t. } \|W_l H_l\|^2_F \geq C_0, \forall l$.
    Under the Neural Feature Ansatz $W_l^\top W_l \approx \tfrac{C_l}{N} \sum_i \nabla_{h_{l,i}} f_l \, \nabla_{h_{l,i}} f_l^\top$ for layer-dependent constants $C_l > 0$, the surrogate satisfies
    \[
    \mathcal{S}(G^l_{\mathrm{id}}) \begin{cases}
    \gtrsim C_0 \|Y\|_2^2 \, \Phi^{-1}_l, & \text{if } m > 1 \text{ and } \sum_i y_i = 0, \\
    \gtrsim C_0 \|Y\|_2^4 \, \Psi^{-1}_l, & \text{if } m = 1 \text{ and } f_l \text{ is piecewise linear,}
    \end{cases}
    \]
    where $\Phi^{-1}_l$ and $\Psi^{-1}_l$ are monotonically increasing in $l$.
\end{theorem}

\begin{proof}[Proof Sketch]
    Using the NFA, we decompose the surrogate into a sum of terms of the form $h_{l,i}^\top \nabla_{h_{l,j}} f$.
    A first-order Taylor expansion and homogeneity then gives $\mathcal{S}(G^l_{\mathrm{id}}) \approx C_l \|Y\|_2^4$ for both cases.
    Similarly, the NFA lets us decompose $\|W_l H_l\|_F^2 = \mathrm{tr}(H_l^\top W_l^\top W_l H_l) \geq C_0$.
    Relating the two expressions shows that the lower bound of $C_l$ is monotonically increasing in $l$.
    The full proof is in \Cref{a2.sub4:depth}.
\end{proof}

The two cases of \Cref{thm:depth} differ in how the layer-wise complexity is measured.
In the smooth $m$-homogeneous case ($m > 1$), $\Phi_l$ is governed by the homogeneity gap $ \propto m^{L-l}$.
In the piecewise-linear case ($m = 1$), the lower bound $\Psi_l$ is governed by the number of linear regions of $f_l$~\citep{montufar2014number}.
Both quantities measure the same thing from different angles: how far the subnetwork $f_l$ remains from being a linear function of its input.
An interpolating network must produce $y$ from $x$ end-to-end, and $\Phi_l$ (or $\Psi_l$) measures the nonlinearity $f_l$ still has to supply to close the gap between $h_l$ and $y$.
At shallow layers, $f_l$ does most of the work and the complexity is large; at deep layers, $f_l$ approaches a linear map and the complexity shrinks.
Depth is the mechanism by which the total required nonlinearity is distributed across layers, and the surrogate $\mathcal{S}(G^l_{\mathrm{id}})$ --- inversely proportional to $\Phi_l$ or $\Psi_l$ --- correspondingly grows with depth.

This formalizes \textit{sequential linearization}: as $l$ increases, the residual nonlinearity $f_l$ must supply shrinks, and the network leaves a progressively more linear problem for subsequent layers.
The picture matches the standard empirical observation underlying linear probes~\citep{alain2016understanding}: deeper layers admit higher-accuracy linear classifiers, because by the time data reaches them, much of the nonlinear work has already been done.

\subsection{Empirical Validation}
\label{sec4.sub3:empirical}

\begin{figure}
    \centering
    \includegraphics[width=0.49\linewidth]{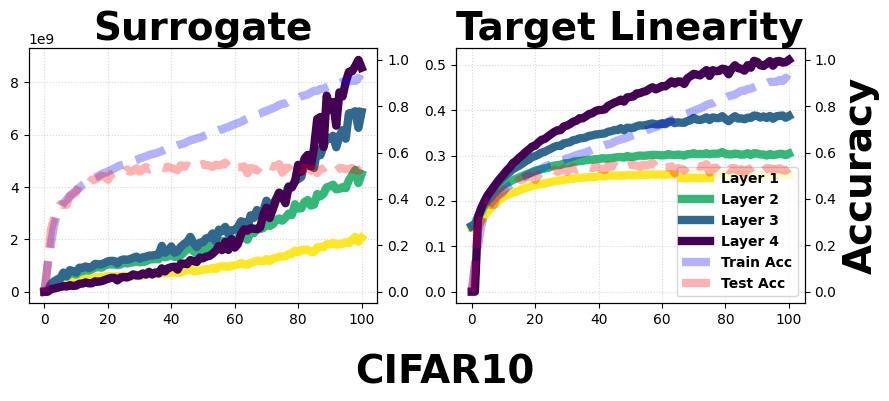}
    \includegraphics[width=0.49\linewidth]{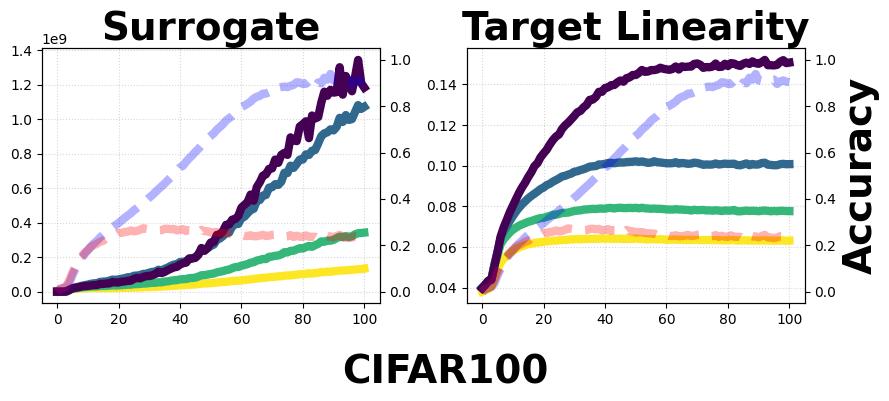}
    \caption{
    Training and layer-wise dynamics of the surrogate and Target Linearity for a 4-layer fully connected network trained on CIFAR-10 (left) and CIFAR-100 (right).
    Solid lines show layers 1--4 (yellow to dark purple); dashed lines show train and test accuracy.
    }
    \label{fig:empirical}
\end{figure}

We train a four-layer fully connected network with hidden dimension 256 and GELU activations~\citep{hendrycks2016gaussian} on CIFAR-10 and CIFAR-100~\citep{krizhevsky2009learning}, using SGD with learning rate $0.005$ for 100 epochs.
To extend the Target Linearity to the multiple target cases, we extend the surrogate to multi-class classification by replacing $Y$ with the one-hot label matrix $Y \in \mathbb{R}^{N \times C}$ and redefining the surrogate as $\mathcal{S}_{\mathrm{cls}}(G) = \mathrm{tr}(Y^\top G Y)$.
Since $\mathrm{tr}(Y^\top G Y) = \sum_c y_c^\top G y_c$, the dynamics from \Cref{sec4.sub2:dynamics} carry over directly.

\Cref{fig:empirical} reports the training and layer-wise dynamics of the surrogate $\mathcal{S}_{\mathrm{cls}}(G^l_{\mathrm{id}})$ and TL ($\lambda=0$) at each layer.
Both quantities grow throughout training, consistent with the regime identified in \Cref{thm:training}, and both increase with layer index, consistent with the sequential linearization predicted by \Cref{thm:depth}.
The surrogate grows roughly exponentially while TL grows roughly logarithmically; this is consistent with \Cref{thm:surrogate}, where the lower bound on TL involves the inverse of the surrogate.
Analogous results for the Adam optimizer~\citep{kingma2014adam} are in \Cref{a1.sub3:adam}.

A notable benefit of TL is that, by being defined as the $R^2$ of regression onto the target, it applies to any task with a continuous target --- not just classification.
We confirm this on self-supervised generation: TL grows during VAE training~\citep{kingma2013auto} when the target is the reconstructed input (\Cref{a1.sub4:generative}).
This positions TL as a unifying measure of feature learning across discriminative and generative tasks, in contrast to phenomena like Neural Collapse~\citep{papyan2020prevalence} that are formulated specifically for classification.

\section{Link to Other Feature Learning Phenomena}
\label{sec5:other}

The framework developed in \Cref{sec4:tl} is general --- TL is defined for any task with a target, and its dynamics are governed by the weight Gram matrix.
We now show that this generality recovers two well-known feature learning phenomena as special cases of our framework: Neural Collapse~\citep{papyan2020prevalence} as the maximum case of our surrogate in classification, and the linear interpolation property of generative models~\citep{shao2018riemannian} as a consequence of high TL in latent space.

\paragraph{Neural Collapse as an Extreme Case}

In the classification setting, \citet{papyan2020prevalence} report the \textit{Neural Collapse} (NC) phenomenon, in which the penultimate-layer features of trained networks (NC1) collapse to their class means with vanishing within-class variation, and (NC2) form a simplex equiangular tight frame.
Existing theoretical accounts of NC rely largely on the Unconstrained Feature Model~\citep{mixon2022neural}, which treats hidden states as freely trainable variables; recent work has begun to relax this~\citep{min2025neural} but only under linear separability assumptions.

\begin{proposition}
\label{prop:nc_extreme}
Consider $f(x) = W^\top h(x) + b$ with hidden states centered ($\sum_i h(x_i) = 0$), $W$ and $b$ trained under squared loss on the centered one-hot label matrix, and $C \leq D$.
Under NC1 and NC2, $\mathcal{S}_{\mathrm{cls}}(G_{\mathrm{id}})$ attains the maximum value allowed by a norm constraint $\|G_{\mathrm{id}}\|_F \leq c$ for some $c > 0$.
\end{proposition}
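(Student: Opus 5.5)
We reduce the surrogate to a trace inner product, $\mathcal{S}_{\mathrm{cls}}(G_{\mathrm{id}}) = \mathrm{tr}(Y^\top G_{\mathrm{id}} Y) = \langle G_{\mathrm{id}}, YY^\top\rangle_F$, and then apply Cauchy--Schwarz in Frobenius norm. Under $\|G_{\mathrm{id}}\|_F \le c$ this gives
\[
\mathcal{S}_{\mathrm{cls}}(G_{\mathrm{id}}) \le c\,\|YY^\top\|_F ,
\]
with equality if and only if $G_{\mathrm{id}} = \alpha\, YY^\top$ for some $\alpha \ge 0$, namely $\alpha = c/\|YY^\top\|_F$. The task is therefore to show that NC1 and NC2, together with the optimality of $(W,b)$ under squared loss, force $G_{\mathrm{id}} = H^\top W W^\top H$ to be a positive multiple of $YY^\top$. (The readout here is $W^\top h$, so the relevant Gram is $H^\top W W^\top H$.) Here $Y$ is the centered one-hot matrix, $Y = \tilde Y - \frac1N \mathbf{1}\mathbf{1}^\top \tilde Y$.

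\textbf{Step 1: the feature Gram.} By NC1, $h_i = \mu_{c(i)}$, so $H = M\tilde Y^\top$ with $M = [\mu_1,\dots,\mu_C]$. By NC2 and centering, the means form a simplex ETF: $M^\top M = \beta\,(I_C - \tfrac1C \mathbf{1}\mathbf{1}^\top)$. I will assume balanced classes, which is implicit in the ETF statement. Then $H^\top H = \beta\,\tilde Y(I-\tfrac1C\mathbf{1}\mathbf{1}^\top)\tilde Y^\top = \beta\, YY^\top$, since centering $\tilde Y$ over samples equals centering over classes when classes are balanced.

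\textbf{Step 2: the optimal readout.} With centered features and centered labels, the optimal bias is $b=0$. The least-squares $W$ solves $HH^\top W = HY$, so $W = (HH^\top)^{+}HY$ (taking the minimum-norm solution if needed). Using $H = M\tilde Y^\top$, $W^\top H$ equals the projection of $Y^\top$ onto the row space of $H$. This row space is spanned by the centered class indicators, which is exactly the column space of $Y$. Hence $W^\top H = Y^\top$, and $G_{\mathrm{id}} = H^\top W W^\top H = YY^\top$, a positive multiple of $YY^\top$.

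\textbf{Step 3: conclusion.} Combining Steps 1 and 2 gives the Cauchy--Schwarz equality case after rescaling to the norm budget. Scaling preserves NC1 and NC2, and the constraint $\|G_{\mathrm{id}}\|_F \le c$ only fixes the scale. So $\mathcal{S}_{\mathrm{cls}}$ attains its maximum $c\,\|YY^\top\|_F$.

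The main obstacle is Step 2: the scale and regularization of $W$. Without ridge, $G_{\mathrm{id}}$ equals $YY^\top$ exactly, so the statement holds with $c = \|YY^\top\|_F$. If $W$ is instead ridge-trained, I would verify that $W^\top H = Y^\top \cdot \beta'/(\beta'+\lambda)$, using that $H$ has equal nonzero singular values by the ETF structure. The Gram then remains a scalar multiple of $YY^\top$, so the equality case is preserved. The condition $C \le D$ is needed so that the ETF embeds in $\mathbb{R}^D$; this gives $\mathrm{rank}\,H = C-1$ and the span identity used in Step 2.
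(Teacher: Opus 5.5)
Your proof is correct and follows the paper's argument. Like the paper, you reduce to the Frobenius Cauchy--Schwarz bound $\mathrm{tr}(Y^\top G Y) \le \|G\|_F \|YY^\top\|_F$ and then show $G_{\mathrm{id}} = YY^\top$, so equality holds with $c = \|YY^\top\|_F$. The one difference is how you get $W^\top H = Y^\top$: the paper cites $W = M^+$ (Proposition 1 of Papyan et al.) and the ETF identity $M^+M = I - \frac{1}{C}\mathbf{1}\mathbf{1}^\top$, whereas you derive it directly from the normal equations as the projection of $Y$ onto the row space of $H$, which is more self-contained.
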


\Cref{prop:nc_extreme} positions NC corresponds to a maximally linearized representation in classification: the geometric structure observed empirically in trained networks~\citep{papyan2020prevalence} is precisely what saturates the surrogate $\mathcal{S}_{\mathrm{cls}}$.
This recovers NC without invoking the Unconstrained Feature Model, instead deriving it from the same Gram-matrix framework that governs general feature learning.
The full proof is in \Cref{a2.sub5:nc_extreme}.
We note that~\citet{beaglehole2024average} independently show that target linearity (in the form of kernel linearization) controls the rate of NC formation in deep Recursive Feature Machines.

\paragraph{Linear Interpolation Property in Generative Models}

\begin{figure}
    \centering
    \includegraphics[width=0.9\linewidth]{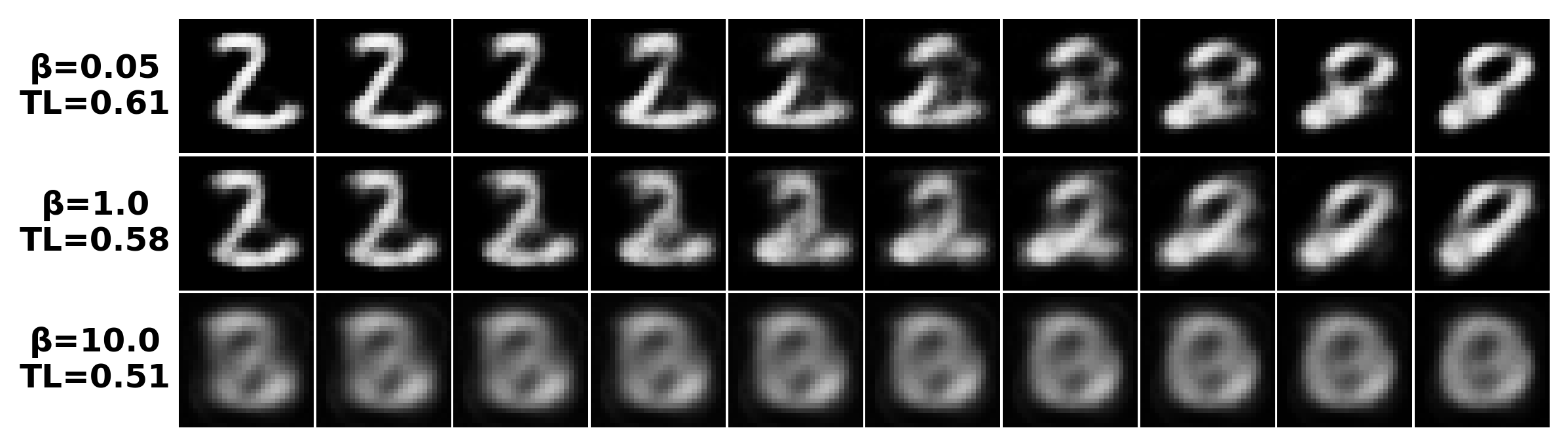}
    \caption{
    Latent-space interpolation between digits $2$ and $8$ in a VAE for different $\beta$ values (the KL divergence term parameter). Lower $\beta$ (higher TL) yields near-linear transitions in pixel space, while higher $\beta$ leads to blurrier and less faithful interpolations.
    }
    \label{fig:vae_interpolation}
\end{figure}

TL is defined as the $R^2$ of regression, so it applies to any task with a continuous target.
We use this to address a phenomenon in generative models: \citet{shao2018riemannian} observed that decoders of trained Variational Autoencoders~\citep{kingma2013auto} approximately respect linear interpolation in latent space, $f(\alpha z_1 + (1-\alpha) z_2) \approx \alpha x_1 + (1-\alpha) x_2$, where $f$ is the decoder, $z_i$ are latent codes, and $x_i$ are the corresponding inputs.
This raises a natural question: when does a nonlinear decoder admit such linear interpolation?

\begin{proposition}
\label{prop:vae_linearity}
Let $Z = [z_1, \ldots, z_N]^\top$ be latent vectors and $X = [x_1, \ldots, x_N]^\top$ the corresponding inputs.
Let $g(z) = z^\top (Z^\top Z)^{-1} Z^\top X$ be the ordinary least-squares estimator, and assume $g \in \mathcal{F}$.
Then for any $\alpha \in [0, 1]$,
\[
\min_{f \in \mathcal{F}} \tfrac{1}{N^2} \sum_{i,j} \|f(\alpha z_i + (1-\alpha) z_j) - (\alpha x_i + (1-\alpha) x_j)\|_2 \;\leq\; \tfrac{1}{N} \sum_i \|g(z_i) - x_i\|_2.
\]
\end{proposition}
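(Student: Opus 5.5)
The plan is to bound the minimum over $\mathcal{F}$ by the value at the particular candidate $f = g$, which is allowed because $g \in \mathcal{F}$ by assumption. Then
\[
\min_{f \in \mathcal{F}} \tfrac{1}{N^2} \sum_{i,j} \|f(\alpha z_i + (1-\alpha) z_j) - (\alpha x_i + (1-\alpha) x_j)\|_2 \le \tfrac{1}{N^2} \sum_{i,j} \|g(\alpha z_i + (1-\alpha) z_j) - (\alpha x_i + (1-\alpha) x_j)\|_2 .
\]
The key structural fact is that the OLS estimator $g(z) = z^\top (Z^\top Z)^{-1} Z^\top X$ is a \emph{linear} map of $z$. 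Writing $B = (Z^\top Z)^{-1} Z^\top X$, we have $g(\alpha z_i + (1-\alpha) z_j) = \alpha g(z_i) + (1-\alpha) g(z_j)$. Hence the residual at an interpolated point splits as
\[
g(\alpha z_i + (1-\alpha) z_j) - (\alpha x_i + (1-\alpha) x_j) = \alpha\,(g(z_i) - x_i) + (1-\alpha)\,(g(z_j) - x_j).
\]

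Next, apply the triangle inequality together with $\alpha, 1-\alpha \ge 0$ to each summand:
\[
\|\alpha(g(z_i)-x_i) + (1-\alpha)(g(z_j)-x_j)\|_2 \le \alpha \|g(z_i)-x_i\|_2 + (1-\alpha)\|g(z_j)-x_j\|_2 .
\]
Summing over all pairs $(i,j)$ and dividing by $N^2$, the first term gives $\alpha \cdot \tfrac{1}{N}\sum_i \|g(z_i)-x_i\|_2$, since summing over $j$ contributes a factor $N$. The second term symmetrically gives $(1-\alpha)\cdot \tfrac{1}{N}\sum_j \|g(z_j)-x_j\|_2$. These two combine to exactly $\tfrac{1}{N}\sum_i \|g(z_i)-x_i\|_2$, which is the claimed bound.

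There is no real obstacle here. The only points requiring care are the following:
\begin{itemize}
\item One must note that $Z^\top Z$ is invertible, which is implicit in the definition of $g$.
\item The linearity of $g$ must hold without an intercept, which is true as $g$ is written. If an affine OLS with intercept were intended, the same argument goes through because affine maps also preserve convex combinations.
\end{itemize}
Interpretively, the right-hand side is small exactly when Target Linearity of $Z$ with respect to $X$ is high, which connects the proposition to the TL framework.
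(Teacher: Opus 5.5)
Your proposal is correct and follows essentially the same argument as the paper: evaluate at $f = g \in \mathcal{F}$, use linearity of $g$ to split the interpolated residual, apply the triangle inequality, and sum over pairs. The only difference is that the paper first inserts $g$ via a triangle inequality and then notes the $\|f - g\|$ term vanishes at $f = g$, whereas you plug in $f = g$ directly, which is slightly cleaner.
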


The right-hand side is exactly the OLS reconstruction error, which equals square root of $1 - \mathcal{T}_0(Z, X)$ up to normalization.
\Cref{prop:vae_linearity} therefore states that whenever the function class contains the OLS predictor and the latent code admits high TL with respect to the inputs, the decoder can interpolate linearly.
\Cref{fig:vae_interpolation} empirically validates this finding in MNIST trained VAE (from class 2 to 8) and the full proof is in \Cref{a2.sub6:vae_linearity}.

\section{Related Work \& Discussion}
\label{sec6:discussion}

\paragraph{Related Work}

Feature linearization has been observed empirically across several settings: \cite{brahma2015deep} reported decreasing curvature of hidden representations with depth, and \cite{alain2016understanding} showed linear probes attain higher accuracy at deeper layers.
These observations are diagnostic; TL formalizes them as a quantity tied to the weight Gram matrix and explains, via the dynamics of \Cref{sec4.sub2:dynamics}, why such linearization emerges from training rather than treating it as a post-hoc finding.

More broadly, theoretical accounts of feature learning mainly fall into three streams:
(1) \textit{case studies} on structured targets \cite{abbe2022merged, ba2022high, damian2022neural, dandi2025computational}, whereas our results apply to general neural networks;
(2) \textit{metric studies} that propose descriptive measures of representation quality \cite{chou2025feature, cohen2020separability, kornblith2019similarity}, which capture \textit{whether} feature learning occurred but not \textit{how} and \textit{why}; and
(3) \textit{phenomenon-driven studies} that explain specific behaviors such as Neural Collapse \cite{mixon2022neural, papyan2020prevalence} or grokking \cite{power2022grokking, liu2022towards} in isolation.
We aim to unify these threads by deriving feature dynamics directly from the Gram structure.

Closest to our analysis is the Neural Feature Ansatz \cite{radhakrishnan2024mechanism}, which conjectures $W_l^\top W_l \propto \mathbb{E}[\nabla_{h_l} f \cdot \nabla_{h_l} f^\top]$.
Our VCS captures the Gram update dynamics directly (\Cref{thm:vcs}) without invoking the ansatz, tracking the actual update at $\rho=0.98$ versus $0.71$ for AGOP.
A more comprehensive literature review is in \Cref{a1.sub1:rel_works}.

\paragraph{Target Linearity in Pre-trained Transformer}

We measure TL across every layer of BERT~\citep{devlin2019bert} on four downstream tasks comparing randomly initialized, pre-trained, and fine-tuned models in \Cref{a1.sub5:pretrain}. 
The goal is to check whether the dynamics theorized in \Cref{sec4.sub2:dynamics} holds for complex transformers and track what TL illuminates on the pre-training and fine-tuning regime.
We observe pre-training alone produces representations that are substantially more target linear with downstream labels than random initialization.
Fine-tuning then converts this structure into a near-monotone TL profile that saturates within a single epoch and barely improves thereafter. 

\paragraph{Link to Generalization}

From \Cref{fig:empirical}, TL exhibits an interesting pattern when compared to generalization performance.
In both CIFAR-10 and CIFAR-100, the training accuracy exceeds 0.9, while the test accuracy differs significantly (50\% in CIFAR-10 and 20\% in CIFAR-100).
Meanwhile, the TL values differ substantially: the last-layer TL exceeds 0.5 for CIFAR-10 but only 0.15 for CIFAR-100.
This is particularly interesting because TL is measured on the training set, where both models achieve high accuracy.
To further investigate this phenomenon, we evaluate TL on a random-label dataset~\citep{zhang2017understanding} and a modular arithmetic dataset which exhibit grokking~\citep{power2022grokking} in \Cref{a1.sub6:generalization}.

\newpage
\bibliography{cite}


\newpage
\appendix

\section{Related Works \& Limitations}
\label{a1:rel_lim}

\subsection{Comprehensive Related Works}
\label{a1.sub1:rel_works}

As feature learning is one of the distinguishing advantages of neural networks over classical machine learning models (such as kernel machines), an emerging body of work has focused on characterizing both theoretical and empirical aspects of feature learning.  From the theoretical side, the study of feature learning in neural networks has largely been focused on fully connected networks used for multi-index modeling~\cite{abbe2022merged, ba2022high, damian2022neural, dandi2025computational}.  Specifically, multi-index models are functions $f^*: \mathbb{R}^{d} \to \mathbb{R}$ of the form $f^*(x) = g(U^Tx)$ where $U \in \mathbb{R}^{d \times r}$ satisfies $r << d$.  It is known in such settings that neural networks outperform kernel machines by estimating the subspace $U$ through feature learning~(see e.g.,~\cite{damian2022neural} for an example of such an argument).  Select work~\cite{karp2021local} has also theoretically analyzed feature learning in more general neural network architectures such as convolutional networks.  Additional work has studied feature learning in deep fully connected networks, identifying problem settings for which feature learning in 2-hidden layer networks leads to improved sample complexity over feature learning in 1-hidden layer networks~\cite{nichani2023provable}.

As a step toward more broadly characterizing the mechanism of feature learning in neural networks, the work of~\cite{radhakrishnan2024mechanism} posited the Neural Feature Ansatz (NFA) stating that Gram matrices of weights in layer $\ell$ of a neural network are proportional to the Average Gradient Outer Product (AGOP) of the network (where gradients are computed with respect into layer $\ell$).  This work presented empirical evidence for this claim across fully connected networks, convolutional networks, recurrent networks and transformers.  Follow up work~\cite{radhakrishnan2025linear} proved that the claim of~\cite{radhakrishnan2024mechanism} was true in deep linear networks used for low rank matrix completion (a prominent setting for studying feature learning~\cite{gunasekar2018implicit}).  Additional follow up work~\cite{boix-adsera2026fact} showed that the NFA did not hold with exact proportionality for all data generating processes and instead connected Gram matrices with an alternative object (based on gradient of the loss and activations).  Related theoretical work~\cite{parkinson2025relu} also connected the Expected Gradient Outer Product (EGOP) with the implicit bias of fully connected networks with a single nonlinearity.  In particular, this work showed that the representation cost of such networks was bounded by Schatten-norms of the EGOP (implying that networks with minimum representation cost also had low rank EGOPs).  

Lastly, linearization of features is a prominent observation in neural networks that has been studied in several settings~\cite{alain2016understanding, brahma2015deep, mikolov2013efficient}.  Recently, there has been renewed interest in understanding the linearization of features in transformer-based language models, where linear representations of concepts can be used to monitor and steer such models~\cite{beaglehole2026toward, zou2023representation}.

\subsection{Limitations}
\label{a1.sub2:limit}

\paragraph{On First-Order Approximations}
Our analysis of Target Linearity dynamics relies on first-order Taylor expansions, especially the homogeneity-based decompositions (\Cref{lemma:homogeneous}).
This is a deliberate trade-off. 
Existing dynamics analyses~\citep{abbe2022merged, ba2022high, damian2022neural} achieve precise descriptions at the cost of restrictive assumptions on architecture or data. 
We aim for the opposite end of this trade-off: macroscopic dynamics under minimal technical assumptions, capturing leading-order behavior rather than exact trajectories. 

While first-order approximations can be loose for individual terms given the highly nonlinear nature of deep networks, we conjecture that the aggregate quantities we study --- particularly the surrogate $\mathcal{S}(G_{\mathrm{id}})$, which sums over all training samples --- are more robust than any individual term within them, in line with the concentration patterns observed in mean-field analyses~\citep{mei2018mean, sirignano2020mean}. 
We view this approximation-based approach as complementary to exact analyses: trading precision for generality may surface phenomena, such as the sequential linearization of \Cref{sec4.sub2:dynamics}, that restricted exact analyses would not naturally reveal.
The following proposition partially justifies this approach.

\begin{proposition}
    \label{prop.first_order}
    If a function $f$ is $m$-positively homogeneous and input data and function values are centered, i.e., $\bar{x} = \frac{1}{N} \sum_i x_i = 0$ and $\sum_i f(x_i)=0$, then sum of all pairs of first-order Taylor approximation errors among each data points is zero.
\end{proposition}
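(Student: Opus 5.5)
The plan is to write the first-order Taylor approximation error for each ordered pair $(x_i,x_j)$, sum over all pairs, and show that every piece of the sum vanishes under the two centering conditions.

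\textbf{The pairwise error.} Expanding $f$ around $x_j$ and evaluating at $x_i$ gives
\[
\varepsilon_{ij} = f(x_i) - f(x_j) - \nabla f(x_j)^\top (x_i - x_j).
\]
Since $f$ is $m$-positively homogeneous, Euler's identity holds: $x^\top \nabla f(x) = m f(x)$. To justify it, differentiate $f(\alpha x) = \alpha^m f(x)$ in $\alpha$ and set $\alpha = 1$. For piecewise-smooth $f$ such as ReLU networks this needs differentiability at $x_j$, and I will assume that wherever the gradient is used. Substituting $\nabla f(x_j)^\top x_j = m f(x_j)$ gives
\[
\varepsilon_{ij} = f(x_i) + (m-1) f(x_j) - \nabla f(x_j)^\top x_i .
\]
This is exactly the identity behind \Cref{lemma:homogeneous}.

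\textbf{Summing over pairs.} Summing over all ordered pairs $i,j \in \{1,\dots,N\}$, each term is handled separately:
\begin{itemize}
\item $\sum_{i,j} f(x_i) = N \sum_i f(x_i) = 0$, by centering of the function values.
\item $(m-1)\sum_{i,j} f(x_j) = 0$, for the same reason.
\item $\sum_{i,j} \nabla f(x_j)^\top x_i = \big(\sum_j \nabla f(x_j)\big)^\top \big(\sum_i x_i\big) = \big(\sum_j \nabla f(x_j)\big)^\top (N\bar x) = 0$, by centering of the inputs.
\end{itemize}
Hence $\sum_{i,j} \varepsilon_{ij} = 0$. If "all pairs" means unordered pairs or excludes $i=j$, the diagonal terms $\varepsilon_{ii}$ are identically zero, so nothing changes. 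The ordered and unordered sums then differ only by whether $\varepsilon_{ij}$ and $\varepsilon_{ji}$ are counted once or combined, and both versions are zero.

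\textbf{Main obstacle.} The only delicate step is the Euler identity, which requires differentiability along the ray through $x_j$. For ReLU-type networks I would state the claim almost everywhere, or take $\nabla f$ to be any element of the Clarke subdifferential; Euler's identity still holds there for positively homogeneous functions. I would also point out that the argument never bounds any individual $\varepsilon_{ij}$. It only shows that the errors cancel in aggregate, which is precisely the robustness that \Cref{a1.sub2:limit} appeals to.
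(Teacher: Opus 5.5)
Your proof is correct and follows essentially the paper's route: input centering kills the cross term $\sum_{i,j}\nabla f(x_j)^\top x_i$, and Euler's identity plus centering of the function values kills the rest. The only difference is ordering: the paper cancels $\sum_{i,j}f(x_i)-\sum_{i,j}f(x_j)$ by symmetry before invoking Euler, while you substitute Euler pair by pair first. One small caution on your side remark: $\varepsilon_{ij}\neq\varepsilon_{ji}$ in general (e.g.\ $f(x)=x|x|$ with data $\{1,-1\}$ gives $\varepsilon_{12}=-2$), so a sum over $i<j$ of one-directional errors need not vanish, and the unordered version is zero only when each pair contributes $\varepsilon_{ij}+\varepsilon_{ji}$.
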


\begin{proof}
    By writing the sum of first-order Taylor approximation errors,
    \begin{align*}        
    \sum_{ij} \left[ f(x_i) - f(x_j) - (x_i - x_j)^\top \nabla_{x_j} f\right] &= \sum_{ij} f(x_i) - \sum_{ij} f(x_j) - \sum_{ij} (x_i - x_j)^\top \nabla_{x_j} f \\
    &= - \sum_{ij} (x_i - x_j)^\top \nabla_{x_j} f \\
    &= - N \bar{x}^\top \left( \sum_{j} \nabla_{x_j} f \right) + N \sum_{j} x_j^\top \nabla_{x_j} f \\
    &= N \sum_j x_j^\top \nabla_{x_j} f \quad \text{as }\bar{x} = \sum_i x_i = 0 \\
    &= mN \sum_j f(x_j) \quad \text{as }f\text{ is homogeneous} \\
    &= 0  \quad \text{as } \sum_i f(x_i) = 0
    \end{align*}
\end{proof}

\paragraph{On Technical Assumptions}
Our analysis mainly adopts three assumptions that are standard in the deep learning theory literature: (i) m-positive homogeneity~\citep{ji2020directional, kumar2024early, Lyu2020Gradient}, satisfied exactly by bias-free ReLU networks; and (ii) lower-bounded pre-activation norm, which is automatically satisfied unless $WH = 0$ (a degenerate collapse). 
Additionally, the recent  Neural Feature Ansatz~\citep{radhakrishnan2024mechanism} is adopted in dynamic analysis.
NFA has been established theoretically in low-rank matrix completion, a widely studied setting for feature learning~\cite{radhakrishnan2025linear}, and has also been validated empirically across a broad range of architectures and datasets.
Additionally, \citet{boix-adsera2026fact} discuss why the NFA usually holds and the degenerate cases under which it does not.
Although its theoretical foundations are not yet fully understood, our analysis relies on extensive empirical validation in the original work.

\section{Experiment Details \& Additional Experiments}
\label{a2:expr}

We use a machine with AMD Ryzen 9 5900X 12-Core Processor CPU with one NVIDIA RTX 3090 GPU.
Each experiment takes less than an hour, and the time can vary depending on the worker's settings.
We attach our code for full reproduction.

\subsection{Gradient Whitening Experiment}
\label{a2.sub0:whiten}

We solve the following optimization problem:
\[
\arg_{W^+} \min \|W^+ - W\|_F \quad \text{subject to} \quad (W^+)^\top W^+ = (W - \gamma G)^\top (W - \gamma G),
\]
where $G = \nabla_W \mathcal{L}$.
By writing $W^+ = W - \gamma \Delta$, we can obtain the following problem:
\begin{align*}
    \arg_{\Delta} \min \|\Delta\|_F \quad &\text{subject to} \quad (W - \gamma \Delta)^\top (W - \gamma \Delta) = (W - \gamma G)^\top (W - \gamma G) \\
    & \Leftrightarrow \Delta^\top W + W^\top \Delta \approx G^\top W + W^\top G
\end{align*}
which is approximately equivalent to the original problem:
\begin{align*}
    \arg_{\Delta'} \min \|G - \Delta'\|_F \quad &\text{subject to} \quad \Delta'^\top W + W^\top \Delta' \approx 0 \\
    & \Leftrightarrow (W - \gamma \Delta')^\top (W - \gamma \Delta') \approx W^\top W,
\end{align*}
by letting $\Delta = G - \Delta'$.

Solution for the problem is well-known as an orthogonal procrustes problem:
\[
W^+ = U V^\top \tilde{W^+},
\]
for singular value decomposition $W (\tilde{W^+})^\top = U \Sigma V^\top$ and any $\tilde{W^+}$ satisfying $(W^+)^\top W^+ = (W - \gamma G)^\top (W - \gamma G)$.
By simply setting $\tilde{W^+} = W - \gamma G$, we apply the Newton-Schulz method similar to Muon optimizer~\citep{liu2025muon} to compute $UV^\top$.

\subsection{Gradient Whitening on Convolutional Neural Network}
\label{a2.sub1:whiten_cnn}

We present the gradient whitening results for a Convolutional Neural Network~\citep{krizhevsky2012imagenet}.
We train a model with 3 convolutional layers with 16 channels and 2-by-2 kernels, 2 max pooling layers, followed by a final linear classifier.
We train the model using the SGD optimizer with a learning rate of 0.01 and momentum factor of 0.95 for 20 epochs.
The result in \Cref{a1.fig:cnn_whiten} shows negligible difference between original SGD and Gram-whitened SGD, similar to fully connected neural network case.

\begin{figure}[h]
    \centering
    \includegraphics[width=0.995\linewidth]{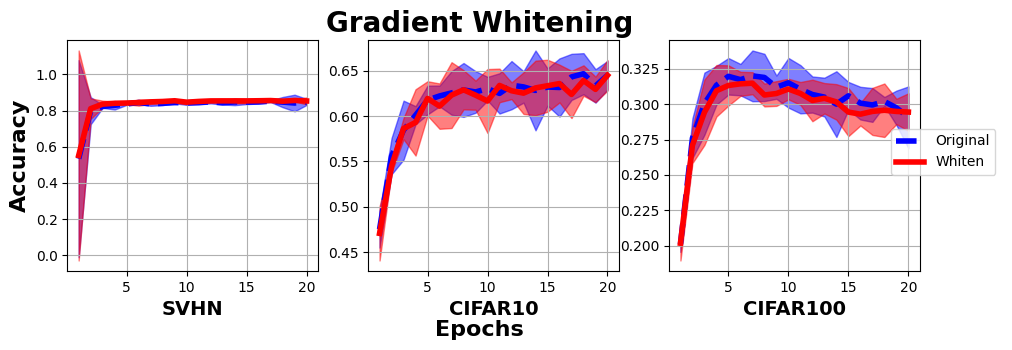}
    \caption{Performance comparison between gradient-whitened updates and standard gradient descent in convolutional neural network}
    \label{a1.fig:cnn_whiten}
\end{figure}

\subsection{Virtual Covariance vs. AGOP}
\label{a1.sub2:vsagop}

In this section, we compare weight Gram ($W^\top W$) with AGOP ($\tfrac{1}{N} \sum_{i=1}^N \nabla_{h_i} f \, \cdot \nabla_{h_i} f^\top$) and the sum of our proposed VCS ($\sum_{t=1}^\tau \widetilde{\mathrm{cov}}(h^+_{l,t}) - \widetilde{\mathrm{cov}}(h_{l,t})$).
We employ a two-layer neural network with a hidden dimension of 256 and train it on the CIFAR-10 dataset~\citep{krizhevsky2009learning} using SGD with a learning rate of 0.05.
We use the GELU function~\citep{hendrycks2016gaussian} as a $\sigma$.
The model achieves 46\% accuracy on the evaluation dataset.
We then plot the reshaped diagonal components of $W^\top W$, AGOP, and the sum of the VCS in \Cref{a1.fig:vc_vs_agop}.
We also compute the Pearson correlation $\rho$ between them.

\begin{figure}[h]
    \centering
    \includegraphics[width=0.995\linewidth]{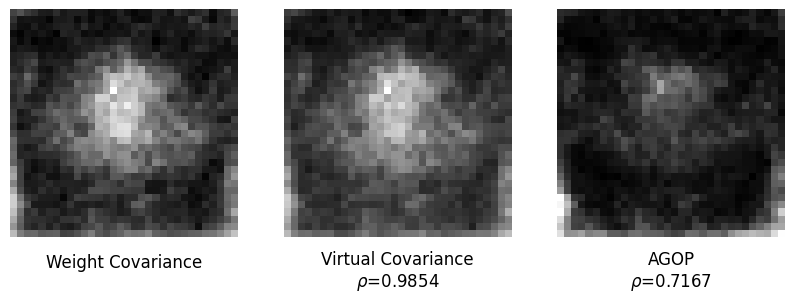}
    \caption{Visualization of the diagonal components of the weight Gram, AGOP, and our proposed Virtual Covariance Shift}
    \label{a1.fig:vc_vs_agop}
\end{figure}

We observe that both VC and AGOP exhibit a high correlation with the weight Gram.
However, VC achieves a correlation of $\rho \approx 0.98$, which is substantially higher than that of AGOP ($\rho \approx 0.71$).
This tendency is also evident in the visualizations: while both methods highlight the centered region, AGOP produces a relatively vague pattern, whereas VC closely resembles the weight Gram.
Through this, we empirically verify that \Cref{thm:vcs} holds, and the $\gamma^2$ factor does not significantly affect the resulting weight Gram.

\subsection{Comparing Lazy vs. Rich Regims}
\label{a1.sub2:lazy}

We train standard-parameterization MLPs (depth 3, ReLU, no bias) of widths $\{32, 128, 512, 1024\}$ on a synthetic staircase target $y = x_1x_2x_3x_4 + x_1x_2x_3 + x_1x_2 + x_1$~\citep{abbe2022merged} with $n=1000$ Gaussian inputs in $\mathbb{R}^{10}$. 
NTK Initialization~\citep{jacot2018neural} ($\mathrm{std}=1/\sqrt{\mathrm{fan\text{-}in}}$) and learning rate ($0.01$, full-batch SGD, $2000$ epochs) are held fixed across widths, so width alone interpolates between rich and lazy regimes. 
Every $20$ epochs we extract the penultimate features $H$ and record $\|y - y_{\mathrm{OLS}}\|_2$, $\|\hat y - y_{\mathrm{OLS}}\|_2$, and $\|y - \hat y\|_2$, where $y_{\mathrm{OLS}} = H^\top(HH^\top + \epsilon I)^{-1}Hy$.

\begin{figure}[h]
\centering
\includegraphics[width=\linewidth]{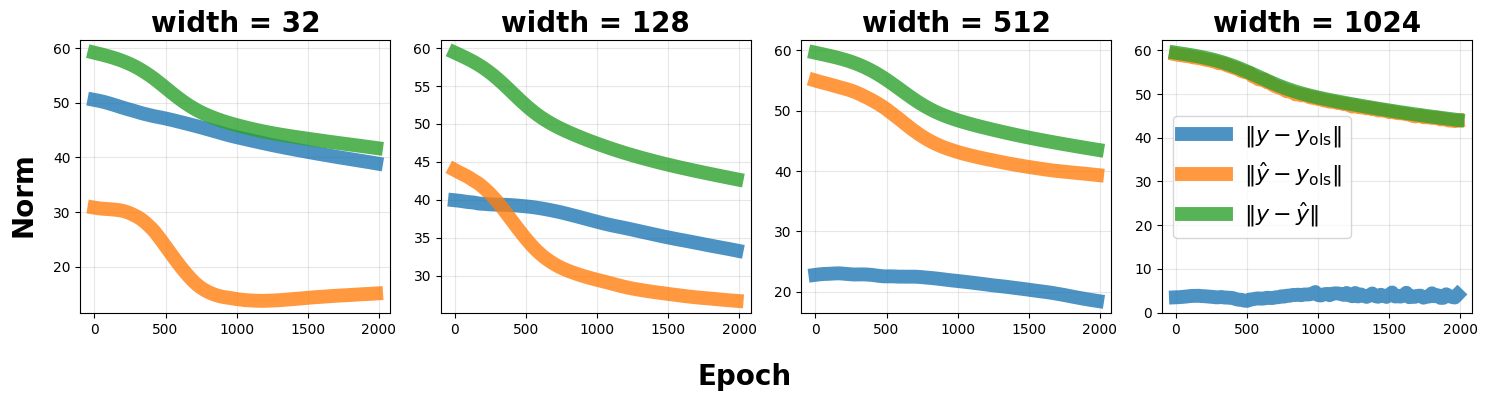}
\caption{Training dynamics of MLPs of increasing width on a staircase target. In the rich regime (widths $32, 128$), $y_{\mathrm{OLS}}$ shifts substantially as features evolve, so the network chases a moving target. As width grows, features freeze: at width $1024$, $\|y - y_{\mathrm{OLS}}\|_2$ is flat near zero.}
\label{a2.fig:compare_dynamics}
\end{figure}

By the triangle inequality, the training loss decomposes as $\|\hat y - y\|_2 \leq \|\hat y - y_{\mathrm{OLS}}\|_2 + \|y_{\mathrm{OLS}} - y\|_2$, isolating two distinct sources of progress: how well the network fits the best linear predictor in its current feature space, and how well that feature space can linearly express $y$.
\Cref{a2.fig:compare_dynamics} shows that in the rich regime (widths $32, 128$), both terms decrease substantially: $\hat y$ chases $y_{\mathrm{OLS}}$ while $y_{\mathrm{OLS}}$ itself shifts toward $y$ as features reorganize.
As width grows, this gap flattens; at width $1024$, $\|y - y_{\mathrm{OLS}}\|_2$ stays near zero throughout training while $\|\hat y - y_{\mathrm{OLS}}\|_2$ alone drives the decrease in $\|y - \hat y\|_2$, recovering the lazy / NTK behavior in which learning reduces to kernel regression in a fixed feature space.
Consequently, the rich regime achieves a lower final $\|\hat y - y\|_2$ through successful feature learning, whereas the lazy regime is bottlenecked by the static expressivity of its initial features.
This directly confirms the moving-target picture predicted in \Cref{sec4.sub2:dynamics}.

\subsection{Target Linearity with ADAM Optimizer}
\label{a1.sub3:adam}

We present the surrogate and TL values with ADAM~\citep{kingma2014adam} optimizer in \Cref{a1.fig:adam}.
Though our analysis mainly focused on the gradient descent optimizer, the results show a consistent pattern for various optimizers.

\begin{figure}[h]
    \centering
    \includegraphics[width=0.495\linewidth]{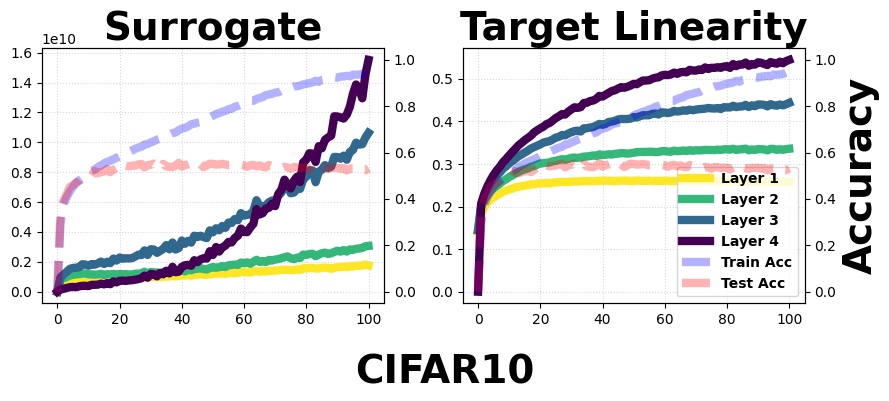}
    \includegraphics[width=0.495\linewidth]{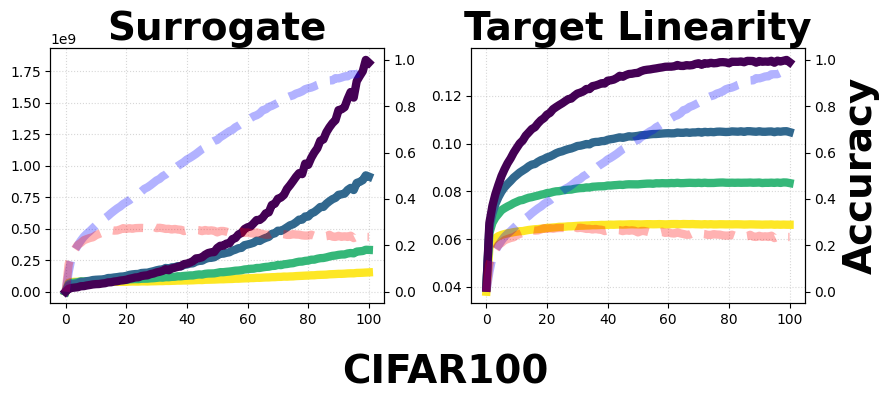}
    \caption{Surrogate and Target Linearity when trained with ADAM optimizer}
    \label{a1.fig:adam}
\end{figure}

\subsection{Target Linearity in Generative Model}
\label{a1.sub4:generative}

We empirically measure how TL evolves in a self-supervised VAE~\citep{kingma2013auto} trained on an image generation task.
Assuming Gaussianity of the latent vector, the ($\beta$-)VAE is trained with the following loss function:
\[
\mathcal{L} = \underbrace{- E_{q(z|x)} [\log p (x|z)]}_{\text{reconstruction error}} + \underbrace{\beta D_{KL} (q(z|x) || p(z))}_{\text{regularization error}},
\]
where the regularization term controlled by $\beta$ constrains the latent space to follow a Gaussian distribution.

We train the VAE on two image datasets: MNIST and CIFAR-10.
We use a 2-layer fully-connected neural network with hidden dimension 512 for both the encoder and decoder, and set the latent dimension to 32.
The VAE is trained with the ADAM optimizer~\citep{kingma2014adam} with a learning rate of 0.0001 for 10 epochs for MNIST and 30 epochs for CIFAR-10.
During training, we measure the TL between latent vectors $Z$ and the prediction target $X$.
We visualize the TL along with the validation set reconstruction error for varying $\beta$ from 0.05 to 10.0.
The results are shown in \Cref{a1.fig:tl_vae}.

\begin{figure}[h]
    \centering
    \includegraphics[width=0.98\linewidth]{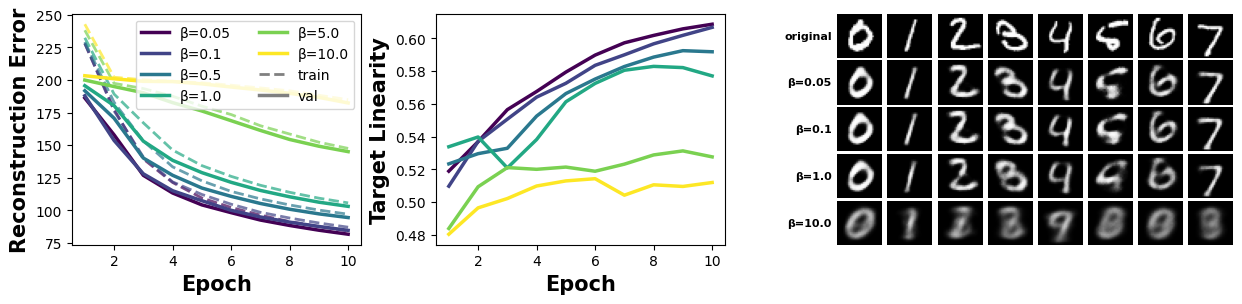}
    \includegraphics[width=0.98\linewidth]{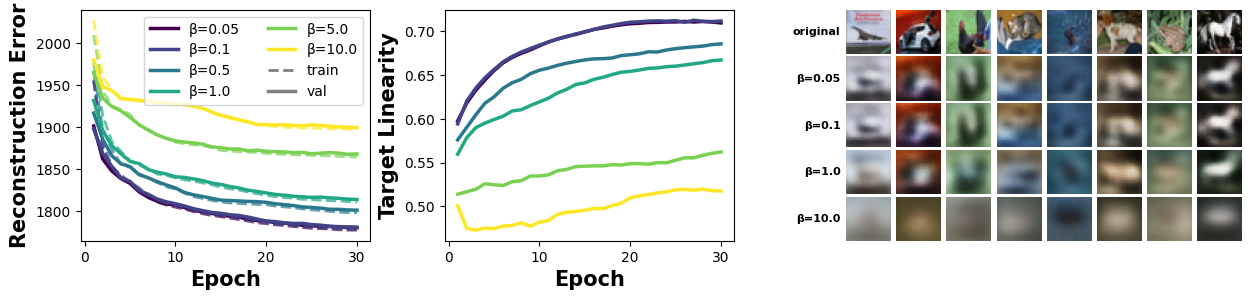}
    \caption{Reconstruction error, Target Linearity, and decoded output for VAE trained on MNIST (upper) and CIFAR-10 (lower)}
    \label{a1.fig:tl_vae}
\end{figure}

The results demonstrate several key observations about the relationship between the regularization parameter $\beta$, reconstruction quality, and TL in VAE training. 

First, we observe a clear trade-off between reconstruction error and TL across different $\beta$ values. 
For both MNIST and CIFAR-10 datasets, lower $\beta$ values (0.05, 0.1) achieve significantly lower reconstruction errors but also exhibit higher TL. 
Conversely, higher $\beta$ values (5.0, 10.0) result in increased reconstruction error while maintaining lower TL. 
This trade-off is visually evident in the generated images: lower $\beta$ values produce sharper, more detailed reconstructions that closely resemble the original images, whereas higher $\beta$ values yield increasingly blurred and abstract representations.

Second, the evolution of TL during training shows distinct patterns depending on $\beta$. 
For low $\beta$ values (0.05, 0.1), TL steadily increases throughout training, suggesting that the latent representations become progressively more linearly related to the input as the model prioritizes reconstruction accuracy. 
In contrast, moderate to high $\beta$ values (1.0, 5.0, 10.0) maintain relatively stable or slightly increasing TL, indicating that stronger regularization constrains the latent space structure.

Finally, the generated samples reveal the practical implications of the $\beta$ parameter. 
For MNIST, even with $\beta=10.0$, the digits remain somewhat recognizable, albeit heavily smoothed. However, for CIFAR-10, high $\beta$ values ($\beta=10.0$) produce nearly uniform gray images with minimal structure, indicating that excessive regularization can completely collapse the model's ability to capture meaningful image features in complex domains.

\subsection{Target Linearity in Pre-trained Transformers}
\label{a1.sub5:pretrain}

We fine-tune BERT~\citep{devlin2019bert} on four tasks spanning two task families: sequence classification (SST-2 and MNLI datasets \citep{wang2018glue}) and token classification (CoNLL-2003 POS tagging and NER datasets \citep{tjong2003conll}). 
All models are trained for 3 epochs with AdamW~\citep{kingma2014adam} at learning rate 2e-5.
To compute target linearity at each stage, we sample 2,000 training examples and reduce hidden states to 256 dimensions via PCA.
We report TL for five model states per task: randomly initialized (architecture only), pre-trained (no fine-tuning), and the three fine-tuned epoch checkpoints.
The results are in \Cref{a1.fig:bert}.

\begin{figure}[h]
    \centering
    \includegraphics[width=0.98\linewidth]{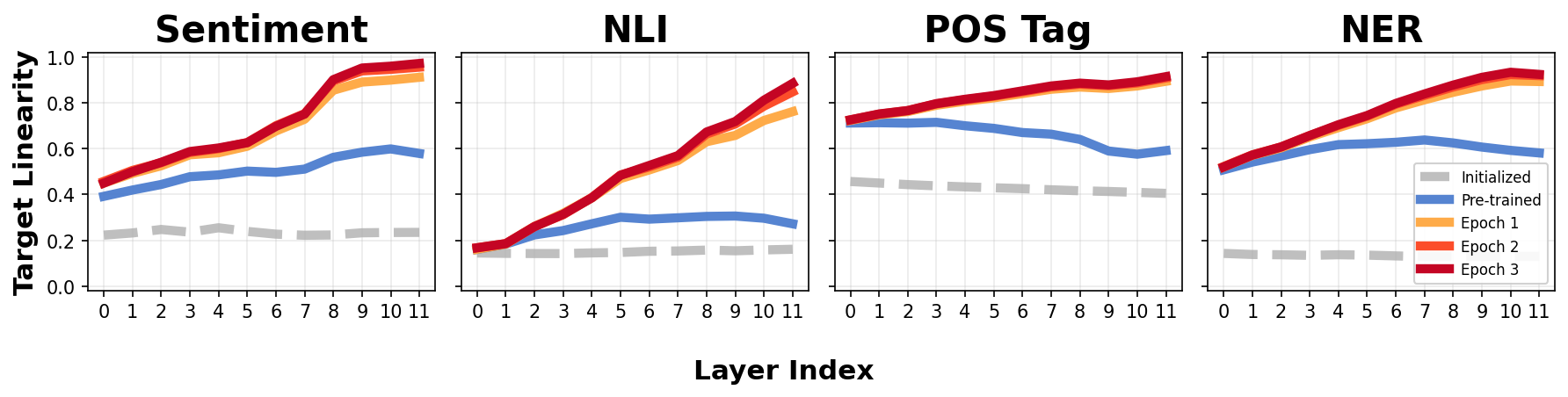}
    \caption{Layerwise Target Linearity across four tasks. TL at each layer of BERT for randomly initialized (gray dashed), pre-trained (blue), and fine-tuned (orange and red, epochs 1–3) models. Measured on SST-2 and MNLI, CONLL 2003 POS and NER datasets.}
    \label{a1.fig:bert}
\end{figure}

Across all four tasks, the pre-trained model's TL curve sits well above the randomly initialized baseline at every layer, despite the pre-training objective (MLM + NSP) never seeing any of these labels. 
This is the cleanest evidence in our experiments that pre-training does not merely produce a useful initialization for optimization but actually learns representations that are relatively linearly separable for downstream tasks. 
The size of the gap, however, depends sharply on how aligned the task is with the pre-training objective: it is largest for POS and NER (which depend on local lexical and syntactic regularities that MLM directly rewards), moderate for sentiment, and smallest for NLI --- where the pre-trained TL barely separates from the random baseline. 
NLI requires cross-sentence reasoning that token-level reconstruction has little incentive to encode, so the linearly-extractable signal is correspondingly weaker.

Pre-training and fine-tuning produce qualitatively different layerwise profiles. 
The pre-trained curves recapitulate a pattern reported in earlier probing work~\citep{tenney2019bert}: TL for semantic tasks (sentiment) climbs monotonically toward the top of the network, while TL for syntactic tasks (POS, NER) peaks in the early-to-middle layers and slightly decays afterward. 
This is consistent with the view that BERT progressively assembles a classical NLP pipeline, with surface and syntactic information concentrated low and semantic information high. 
Fine-tuning, however, erases this division: after a single epoch, all four tasks exhibit the same near-monotone increasing TL profile, with the pre-trained curve serving as a lower envelope.
Even POS --- for which the pre-trained model's optimal layer is layer 3 --- is reorganized so that the final layer carries the most target linear information. 
This implies the layerwise specialization seen during pre-training appears to be a property of the pre-training objective rather than an intrinsic property of the architecture.

\subsection{Target Linearity and Generalization}
\label{a1.sub6:generalization}

\paragraph{Random Label Memorization}

From \Cref{fig:empirical}, TL exhibits an interesting pattern when compared to generalization performance.
In both CIFAR-10 and CIFAR-100, the training accuracy exceeds 0.9, while the test accuracy differs significantly.
The model achieves over 50\% test accuracy on CIFAR-10 but only over 20\% on CIFAR-100, indicating a much larger generalization gap for CIFAR-100.
The TL values also differ substantially: the last-layer TL exceeds 0.5 for CIFAR-10 but only 0.15 for CIFAR-100.
This is particularly interesting because TL is measured on the training set, where both models achieve high accuracy.
These results suggest a potential relationship between generalization and TL, although we do not provide a theoretical analysis.

To further investigate this phenomenon, we measure TL on randomly labeled datasets.
It is well known that neural networks can memorize randomly labeled data, resulting in high generalization error~\citep{zhang2017understanding}.
We report TL values and train/test accuracy for various values of $p \in [0, 1]$, where $p$ denotes the fraction of data with randomly assigned labels.
When $p=0$, the labels are identical to the original dataset, whereas when $p=1$, all labels are randomly shuffled.
We train a 3-layer neural network with a hidden dimension of 256 on CIFAR-10 for 200 epochs to allow sufficient memorization.
The model is optimized using stochastic gradient descent with a learning rate of 0.005.
The results are presented in \Cref{a1.fig:random_label}.

\begin{figure}[h]
    \centering
    \includegraphics[width=0.99\linewidth]{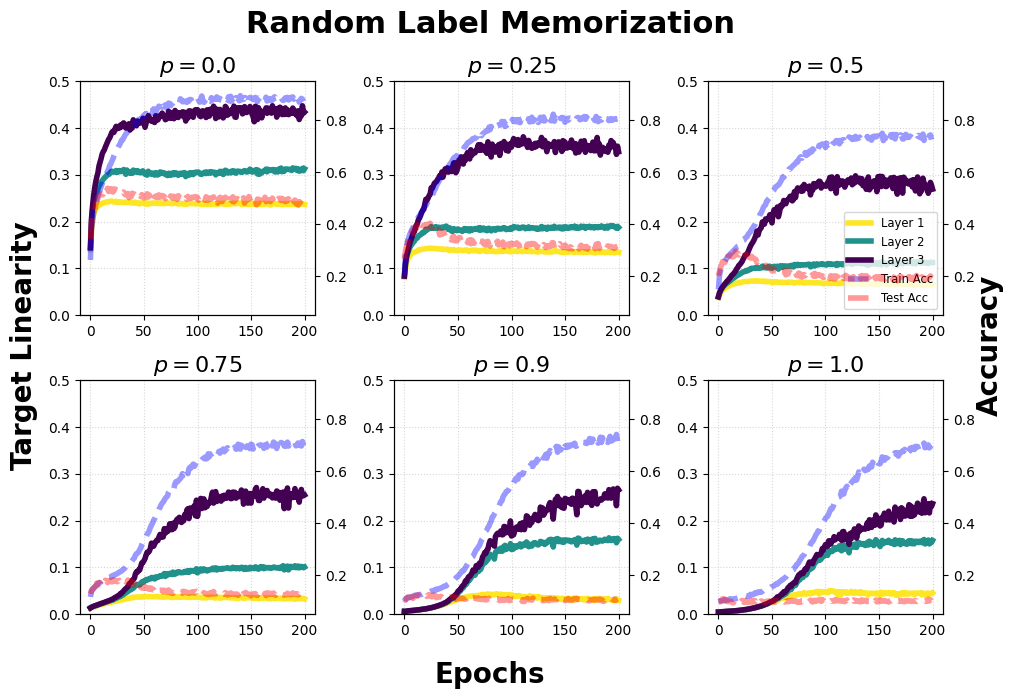}
    \caption{Target Linearity measured on randomly labeled CIFAR 10}
    \label{a1.fig:random_label}
\end{figure}

\Cref{a1.fig:random_label} shows that increasing the random label probability $p$ leads to both a larger generalization error and a decrease in last-layer TL.
As $p$ increases, the overall training accuracy decreases up to $p=0.5$, while it converges to approximately $0.7$ for $p \geq 0.75$.
In contrast, the test accuracy decreases more rapidly, from $0.5$ ($p=0.0$) to $0.3$ ($p=0.25$) and $0.2$ ($p=0.5$), resulting in an increasing generalization gap.
Although the final training accuracy remains around $0.7$ and the test accuracy around $0.1$ for $p \geq 0.75$, higher values of $p$ require more training time to reach high training accuracy.
As a result, during the middle stages of training, the generalization gap is larger for higher $p$.

Interestingly, TL also decreases substantially: the last-layer TL drops from $0.45$ ($p=0.0$) to $0.3$ ($p=0.5$), while the training accuracy decreases from $0.85$ to $0.75$.
TL in other layers also decreases over this range.
For $p \geq 0.75$, the last-layer TL at the end of training remains around $0.25$.
However, similar to the slower increase in training accuracy, the growth of TL also slows for higher $p$.
Overall, the alignment between decreasing TL and an increasing generalization gap provides further evidence that TL may serve as a measure of generalization.

More interestingly, the TL of the middle layers exhibits a sudden increase for $p \geq 0.75$, unlike the cases with $p \leq 0.75$.
This may indicate the need for additional layers to fully memorize highly randomized labels.
For $p \leq 0.75$, the model can achieve sufficiently low loss using modestly processed intermediate representations exploiting the last layer, resulting in rapid early increases.
However, for $p \geq 0.75$, the model explores the parameter space more extensively, showing slow early increases followed by sudden changes across multiple layers.
This observation suggests that layer-wise TL patterns may reveal characteristics related to data complexity.

\paragraph{Grokking}

Another interesting generalization-related phenomenon is grokking~\citep{power2022grokking}.
Grokking refers to a phenomenon in which a neural network first memorizes the training data with poor generalization, and only after prolonged training suddenly transitions to a solution that generalizes well.
This behavior was observed in algorithmic tasks such as modular arithmetic, where models trained with standard optimization initially achieve near-perfect training accuracy while test accuracy remains low, followed by an abrupt improvement in generalization after many additional training steps.
The transition is often associated with implicit regularization effects of optimization, such as weight decay, which gradually favor simpler or more structured solutions over memorizing ones.

Here, we examine how TL evolves under various grokking settings.
We primarily vary the weight decay parameter $\lambda$, which is known to control the timing of grokking.
We use a modular addition dataset with a prime modulus of 61.
A 3-layer neural network with a hidden dimension of 256 is trained using the Adam optimizer~\citep{kingma2014adam} with a learning rate of 0.001 for 500 epochs.
By varying $\lambda \in (0, 1)$, we analyze how TL changes as grokking emerges.
The results are presented in \Cref{a1.fig:grokking}.

\begin{figure}[h]
    \centering
    \includegraphics[width=0.98\linewidth]{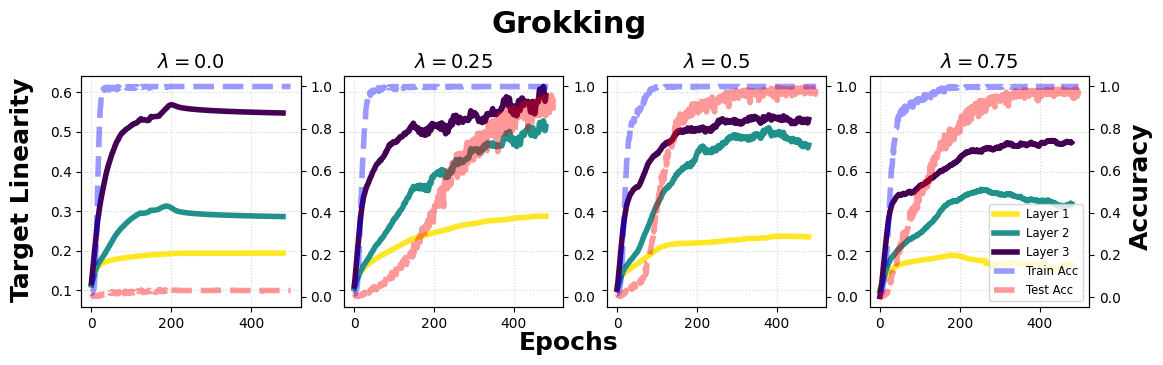}
    \caption{Target Linearity dynamics under Grokking setting}
    \label{a1.fig:grokking}
\end{figure}

\Cref{a1.fig:grokking} shows diverse patterns of overfitting and grokking across different values of the weight decay parameter $\lambda$.
One notable observation is that, for $\lambda \geq 0.25$, TL continues to increase even after memorization, i.e., when the training accuracy is approximately $1.0$.
TL increases until grokking emerges and then saturates in the final stage of training.
These results suggest a potential relationship between TL dynamics and generalization.

\begin{figure}[h]
    \centering
    \includegraphics[width=0.98\linewidth]{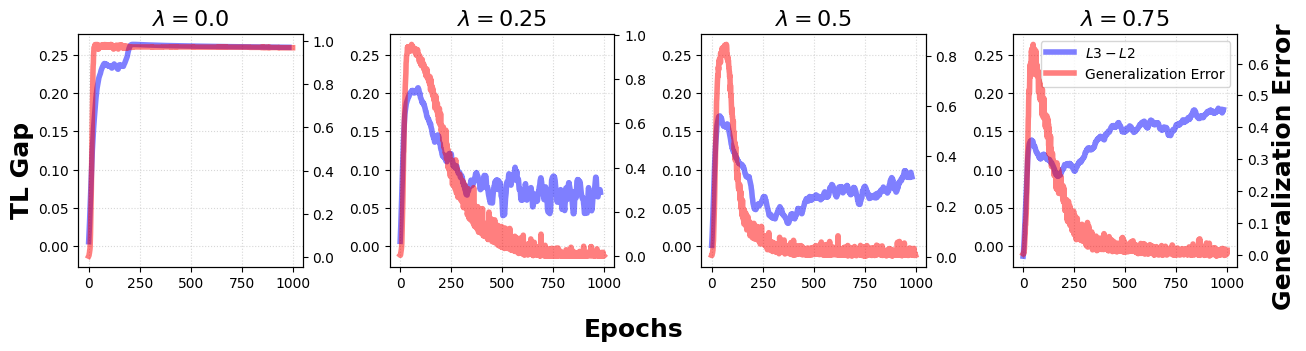}
    \caption{Comparing Target Linearity gap and generalization error}
    \label{a1.fig:grokgap}
\end{figure}

An interesting observation is that, as grokking emerges, the TL of the second layer (Layer 2) increases sharply.
As a result, the gap between the second and last layers (Layer 3) narrows, although it widens again after generalization in the $\lambda = 0.75$ case.
For a more direct comparison, we plot the TL gap between the second and last layers alongside the generalization error in \Cref{a1.fig:grokgap}.
Although the TL gap increases after generalization, particularly for $\lambda = 0.75$, it exhibits a similar decreasing pattern at the onset of grokking.
These results suggest another potential use of layer-wise TL patterns as a measure of generalization.

\section{Proofs}
\label{a2:proof}

\subsection{Proof of \Cref{thm:vcs} in Convolutional Neural Network}
\label{a2.sub0:vcs_cnn}

Without loss of generality, we study a single convolutional layer with nonlinear $\sigma$ and a linear readout,
\[
f(x)\;=\;a^\top\,\sigma(W * x),
\]
where \(x\in\mathbb{R}^{B\times C_{\mathrm{in}}\times H\times W}\), \(W\in\mathbb{R}^{C_{\mathrm{out}}\times C_{\mathrm{in}}\times k_h\times k_w}\), and \(a\in\mathbb{R}^{C_{\mathrm{out}}}\) is a per-channel linear readout (applied at each spatial site).
Let
\[
D := C_{\mathrm{in}}\,k_h\,k_w,\qquad N := B\,H'\,W'.
\]
We use the standard im2col operator \(\mathcal U\) that stacks receptive-field patches into columns:
\[
h \;:=\; \mathcal U(x)\in\mathbb{R}^{D\times N},
\quad
\widetilde W \in \mathbb{R}^{C_{\mathrm{out}}\times D}\ \text{(flattened kernels)}.
\]
Define pre-activations in patch space:
\[
z \;=\; \widetilde W\,h \in \mathbb{R}^{C_{\mathrm{out}}\times N}.
\]

By the chain rule,
\[
\nabla_{\widetilde W}\mathcal L \;=\; \nabla_z \mathcal{L}\,h^\top,\qquad
\nabla_h \mathcal L \;=\; \widetilde W^\top \nabla_z \mathcal{L}.
\]
So we obtain FL equation (\Cref{eq:fle})
\[
\widetilde W^\top\,\nabla_{\widetilde W}\mathcal L  \;=\; \nabla_h \mathcal L \,\cdot h^\top,
\]
which leads to the CNN version of \Cref{thm:vcs}.

\subsection{Proof for \Cref{prop:vc_alignment}}
\label{a2.sub1:vc_alignment}

\begin{statement}
    Assume $\sigma$ is $L$-Lipschitz and $\|h_{l-1}\|_2 = 1$, and let $W^+_{l-1}$ denote the layer $l{-}1$ weight matrix after one gradient descent step on the single data point producing $h_{l-1}$, with all other layers held fixed.
    Then $|\sigma(W^+_{l-1} h_{l-1}) - \sigma(W_{l-1} h_{l-1})| \leq L^2 |h_l^+ - h_l|$ element-wise.
    If $\sigma$ is additionally monotonically increasing, then $\mathrm{sgn}(\sigma(W^+_{l-1} h_{l-1}) - \sigma(W_{l-1} h_{l-1})) = \mathrm{sgn}(h_l^+ - h_l)$; the sign equality also holds for non-decreasing $\sigma$ such as ReLU under the convention $\mathrm{sgn}(0) \in \{-1, +1\}$.
\end{statement}

\begin{proof}
    Define the pre-activation hidden state $z_l = W_{l-1}h_{l-1}$ and its actual update $z_l^+ = W_{l-1}^+h_{l-1}$.
    Then we can observe 
    \begin{align*}
        z_l^+ &=  W_{l-1}^+ h_{l-1} \\
        &= W_{l-1} h_{l-1} - \gamma (\nabla_{W_{l-1}} \mathcal{L}) h_{l-1} \\
        &= z_l - \gamma \nabla_{z_l} \mathcal{L} \langle h_{l-1}, h_{l-1}\rangle \quad \text{as } \nabla_{W_{l-1}} \mathcal{L} = \nabla_{z_l} \mathcal{L} \cdot h_{l-1}^\top \\
        &= z_l - \gamma \nabla_{z_l} \mathcal{L} \quad \text{as }\lVert h_{l-1} \rVert =1.
    \end{align*}
    By the $L$-Lipschitz property, 
    \begin{align*}
    |\sigma(W^+_{l-1} h_{l-1}) - \sigma(W_{l-1} h_{l-1})| &< L| W^+_{l-1} h_{l-1} - W_{l-1} h_{l-1} | \\
    &= L| z_{l}^+ - z_{l} | \\
    &= L |  \gamma \nabla_{z_{l}} \mathcal{L} | \\
    &= L | \gamma \sigma'(z_l) \odot \nabla_{h_{l}} \mathcal{L} | \quad \text{as } \nabla_{z_{l}} \mathcal{L}=\sigma'(z_l) \odot \nabla_{h_{l}} \mathcal{L}\\
    &< L^2 | \gamma \nabla_{h_{l}}  \mathcal{L} | \quad \text{as } \sigma'(z_l)_i \le L, \forall i \\
    &= L^2 | h_l^+ - h_l |
    \end{align*}

    For the sign case, as $\sigma$ is a non-decreasing function,
    \begin{align*}
        \text{sgn}( \sigma (W_{l-1}^+ h_{l-1}) - \sigma (W_{l-1} h_{l-1}) ) &= \text{sgn}( \sigma (z_l^+) - \sigma (z_l) ) \\
        &= \text{sgn}( z_l^+ - z_l ) \quad \text{by non-decreasing property}\\
        &= \text{sgn}( - \nabla_{z_l} \mathcal{L} ) \\
        &= \text{sgn} ( - \sigma'(z_l) \odot \nabla_{h_{l}} \mathcal{L} ) \\
        &= \text{sgn} ( - \nabla_{h_{l}} \mathcal{L} ) \quad \text{by non-decreasing property} \\
        &= \text{sgn}(h_l^+ - h_l).
    \end{align*}
\end{proof}

\subsection{Proof for \Cref{thm:surrogate}}
\label{a2.sub2:surrogate}

\begin{statement}
    Let $e_0 = \|Y - \bar{Y}\|_2^2$ and $e_1 = \|Y\|^2_2$, and assume $\|W\|_{\mathrm{op}} \leq c_0$, $\|H\|_F \leq c_1$, and $N > d$.
    Then
    \[
        \mathcal{T}_\lambda(H, Y) \geq 1 - C \cdot \mathcal{S}(G_{\mathrm{id}})^{-1},
    \]
    for some fixed constant $C$.
\end{statement}

If $W,H,G$ are $l$-layer weight, hidden state, and Gram, define $W',H',G'$ as $l-1$-layer quantities.
We first show the relationship between $\mathcal{S}(G_{\text{id}})$ and $\mathcal{S}(G'_{\sigma})$.

\begin{lemma}
    With a regularity condition on weight matrix $\| W \|_{op} < c$, $\mathcal{S}(G'_{\sigma}) > c^{-2} \cdot \mathcal{S}(G_{\text{id}})$.
    \label{lemma:gid_gsigma}
\end{lemma}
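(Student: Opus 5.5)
The plan is to notice that both sides of \Cref{lemma:gid_gsigma} are squared norms of the same vector $HY$: one taken directly, the other after applying $W$. The inequality should then follow from the operator-norm bound on $W$.

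First I will unpack the notation. The layer-$l$ hidden states are produced from the previous layer by the activation: $H = \sigma(W' H')$. The nonlinear Gram at layer $l-1$ is therefore
\[
G'_\sigma = \sigma(W'H')^\top \sigma(W'H') = H^\top H .
\]
The surrogate then becomes a squared Euclidean norm,
\[
\mathcal{S}(G'_\sigma) = Y^\top H^\top H Y = \|HY\|_2^2 .
\]
Likewise, with $G_{\mathrm{id}} = H^\top W^\top W H$, I get
\[
\mathcal{S}(G_{\mathrm{id}}) = Y^\top H^\top W^\top W H Y = \|W (HY)\|_2^2 .
\]

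Next I apply the definition of the operator norm to the vector $v = HY \in \mathbb{R}^{d_l}$:
\[
\|Wv\|_2^2 \le \|W\|_{\mathrm{op}}^2 \|v\|_2^2 .
\]
Combined with $\|W\|_{\mathrm{op}} < c$, this gives
\[
\mathcal{S}(G_{\mathrm{id}}) \le \|W\|_{\mathrm{op}}^2\, \mathcal{S}(G'_\sigma) < c^2\, \mathcal{S}(G'_\sigma)
\]
whenever $HY \neq 0$. Rearranging yields $\mathcal{S}(G'_\sigma) > c^{-2}\, \mathcal{S}(G_{\mathrm{id}})$.

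The only delicate point is the strictness of the inequality. If $HY = 0$, both surrogates vanish and only equality holds. I will therefore either state the lemma with $\geq$, which is all that \Cref{thm:surrogate} needs, or add the nondegeneracy assumption $HY \neq 0$. The latter is natural, since otherwise the features carry no linear signal about the target and $\mathcal{T}_\lambda$ is trivially minimal. Beyond this edge case I expect no real obstacle: the argument is a one-line operator-norm bound once $G'_\sigma$ is identified with $H^\top H$.
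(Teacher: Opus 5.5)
Your proposal is correct and follows the paper's own proof: identify $G'_\sigma = H^\top H$ via $H = \sigma(W'H')$ and bound $Y^\top H^\top W^\top W H Y$ by $\|W\|_{\mathrm{op}}^2\, Y^\top H^\top H Y$. Your remark that strict inequality fails when $HY = 0$ is a valid refinement the paper glosses over. Either the nonstrict version or the nondegeneracy assumption suffices for \Cref{thm:surrogate}.
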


\begin{proof}[Proof for \Cref{lemma:gid_gsigma}]
    We can show
    \begin{align*}
        \mathcal{S}(G_{\text{id}}) = Y^\top H^\top W^\top W H Y < c^2 \cdot Y^\top H^\top H Y  = c^2 \cdot Y^\top \sigma(W' H')^\top \sigma(W' H') Y = c^2 \cdot \mathcal{S}(G'_{\sigma}),
    \end{align*}
    where the inequality holds by the  the definition of the matrix operator norm.
\end{proof}

\Cref{lemma:gid_gsigma} shows that increasing the $l$-th layer linear $\mathcal{S}(G_{\text{id}})$ increases the lower bound of $l-1$-th layer non-linear $\mathcal{S}(G'_{\sigma})$.
This arguments holds for all $l=2, ..., L$ except the first $1$-st layer where the hidden state $H$ is equal to the fixed input data $X$.

Since we showed the relationship between two surrogates, we now show how  $\mathcal{S}(G_{\sigma})$ is related to our target quantity  $\mathcal{E}_{\lambda}(G_{\sigma})$.

\begin{lemma}
    For a symmetric positive (semi-) definite matrix $G \in \mathbb{R}^{N \times N}$, vector $Y \in \mathbb{R}^{N}$, and a positive constant $\lambda>0$, the following holds:
    \begin{align*}
        Y^\top (\lambda \cdot I + G)^{-2} Y \leq \kappa \cdot \|Y\|_2^4 \cdot \left[ Y^\top (\lambda \cdot I + G) Y \right]^{-1}
    \end{align*}
    where $\kappa = \frac{\left( 2\lambda + \lambda_{\max} + \lambda_{\min} \right)^2 }{4 (\lambda +  \lambda_{\min})^{3}}$, and $\lambda_{\max}, \lambda_{\min}$ are maximum and minimum eigenvalue of $G$.
    \label{lemma:error_and_s_sigma}
\end{lemma}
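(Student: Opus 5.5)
The plan is to reduce the inequality to a scalar statement about a probability distribution on the spectrum of $A := \lambda I + G$, and then to apply a Kantorovich-type argument. Write $a := \lambda + \lambda_{\min}$ and $b := \lambda + \lambda_{\max}$. Since $G$ is symmetric positive semidefinite and $\lambda > 0$, $A$ is symmetric positive definite with all eigenvalues in $[a,b]$ and $a > 0$. In this notation $\kappa = (a+b)^2/(4a^3)$. If $Y = 0$ the right-hand side is undefined and the left-hand side vanishes, so we may assume $Y \neq 0$. Since $Y^\top A Y > 0$, the claim is then equivalent to
\[
\bigl(Y^\top A^{-2} Y\bigr)\bigl(Y^\top A Y\bigr) \le \kappa\, \|Y\|_2^4 .
\]

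First I diagonalize $A = U \operatorname{diag}(\mu_1,\dots,\mu_N) U^\top$ with $\mu_i \in [a,b]$. Set $\tilde Y = U^\top Y$ and $p_i = \tilde Y_i^2/\|Y\|_2^2$, so that $p$ is a probability vector. Write $\mathbb{E}$ for expectation of a random eigenvalue $\mu$ drawn with weights $p$. Then
\[
Y^\top A^{-2} Y = \|Y\|_2^2\,\mathbb{E}[\mu^{-2}], \qquad Y^\top A Y = \|Y\|_2^2\,\mathbb{E}[\mu],
\]
and the target becomes $\mathbb{E}[\mu^{-2}]\,\mathbb{E}[\mu] \le \kappa$.

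Next I peel off one inverse power. Since $\mu \ge a$ pointwise, $\mu^{-2} \le a^{-1}\mu^{-1}$, and hence $\mathbb{E}[\mu^{-2}] \le a^{-1}\mathbb{E}[\mu^{-1}]$. It therefore suffices to prove the classical Kantorovich inequality $\mathbb{E}[\mu]\,\mathbb{E}[\mu^{-1}] \le (a+b)^2/(4ab)$.

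For Kantorovich, I use that $(\mu - a)(\mu - b) \le 0$ on $[a,b]$. Dividing by $\mu > 0$ gives $\mu + ab\,\mu^{-1} \le a+b$, and taking expectations gives $\mathbb{E}[\mu] + ab\,\mathbb{E}[\mu^{-1}] \le a+b$. By AM--GM the left side is at least $2\sqrt{ab\,\mathbb{E}[\mu]\,\mathbb{E}[\mu^{-1}]}$. Squaring yields the bound.

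Combining the two steps,
\[
\mathbb{E}[\mu^{-2}]\,\mathbb{E}[\mu] \le \frac{(a+b)^2}{4a^2 b} \le \frac{(a+b)^2}{4a^3} = \kappa,
\]
where the last inequality uses $b \ge a$. Multiplying back by $\|Y\|_2^4$ and dividing by $Y^\top A Y$ gives the lemma. In fact this proves the slightly sharper constant $(a+b)^2/(4a^2 b)$.

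The only potentially delicate step is the Kantorovich bound. It is standard and the quadratic trick above settles it, so I do not expect a real obstacle. The main care point is keeping $a = \lambda + \lambda_{\min} > 0$, which is guaranteed by $\lambda > 0$ even when $G$ is singular.
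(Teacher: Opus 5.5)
Your proof is correct and follows essentially the same route as the paper: diagonalize $\lambda I + G$, apply the Kantorovich inequality to $(Y^\top (\lambda I+G) Y)(Y^\top (\lambda I+G)^{-1} Y)$, then compare $Y^\top (\lambda I+G)^{-2} Y$ with $Y^\top (\lambda I+G)^{-1} Y$. The differences are minor: you prove Kantorovich inline rather than citing it, and your pointwise bound $\mu^{-2} \le a^{-1}\mu^{-1}$ is tighter than the paper's factor $(\lambda+\lambda_{\max})/(\lambda+\lambda_{\min})^2$ (which the paper gets by bounding both sides against $\|Y\|_2^2$), so you reach the sharper constant $(a+b)^2/(4a^2 b)$ before relaxing to the stated $\kappa$.
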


\begin{proof}[Proof for \Cref{lemma:error_and_s_sigma}]
    By spectral decomposition, $G = Q \Lambda Q^\top$, $Y^\top \left[ \lambda I + G\right] Y = \sum_{i} (\lambda + \lambda_i) \cdot \langle q_i, Y \rangle^2 $.
    Similarly, $Y^\top \left[ \lambda I + G\right]^{-a} Y = \sum_{i} (\lambda + \lambda_i)^{-a} \cdot \langle q_i, Y \rangle^2$ for integer $a$.
    
    By the Kantorovich inequality,
    \begin{align*}
        \left( Y^\top \left[ \lambda \cdot I + G \right] Y\right) \left( Y^\top \left[ \lambda \cdot I + G \right]^{-1} Y\right) \leq \|Y\|_2^4 \frac{\left( 2\lambda +  \lambda_{\max} + \lambda_{\min} \right)^2 }{4 (\lambda +\lambda_{\max}) \cdot (\lambda + \lambda_{\min})}
    \end{align*}
    and the bounds
    \begin{align*}
        & (\lambda + \lambda_{\max})^{-1} \cdot \sum_{i} \langle q_i, Y \rangle^2 \leq Y^\top \left[ \lambda \cdot I + G \right]^{-1} Y \quad \text{and} \\
        & Y^\top \left[ \lambda \cdot I + G \right]^{-2} Y \leq (\lambda + \lambda_{\min})^{-2} \cdot \sum_{i} \langle q_i, Y \rangle^2 \\
        & \Rightarrow Y^\top \left[ \lambda \cdot I + G \right]^{-2} Y \leq (\lambda + \lambda_{\max})^{-1} \cdot \sum_{i}\langle q_i, Y \rangle^2  \cdot \frac{\lambda + \lambda_{\max}}{(\lambda + \lambda_{\min})^{2}} \\
         &\quad\quad\quad\quad\quad\quad \leq Y^\top \left[ \lambda \cdot I + G \right]^{-1} Y \cdot \frac{\lambda + \lambda_{\max}}{(\lambda + \lambda_{\min})^{2}}.
    \end{align*}
    So we can conclude
    \begin{align*}
        Y^\top \left[ \lambda \cdot I + G \right]^{-2} Y &\leq \kappa \cdot \|Y\|_2^4  \cdot \left( Y^\top \left[ \lambda \cdot I + G \right] Y\right)^{-1}, \\
        \text{where }\kappa &= \frac{\left( 2\lambda + \lambda_{\max} + \lambda_{\min} \right)^2 }{4 (\lambda+\lambda_{\max}) \cdot(\lambda +  \lambda_{\min})} \cdot \frac{\lambda + \lambda_{\max}}{(\lambda + \lambda_{\min})^{2}} \\
        & = \frac{\left( 2\lambda + \lambda_{\max} + \lambda_{\min} \right)^2 }{4 (\lambda +  \lambda_{\min})^{3}}.
    \end{align*}
\end{proof}

Since $\mathcal{S}(G_{\sigma})=Y^\top G_{\sigma} Y \propto Y^\top (\lambda \cdot I + G_{\sigma}) Y $ and $\mathcal{E}_{\lambda}(G_{\sigma})=Y^\top (\lambda I + G_{\sigma})^{-2} Y$, we can conclude that increasing $\mathcal{S}(G_{\sigma})$ decreases the upper bound of $\mathcal{E}_{\lambda}(G_{\sigma})$.

Combining these lemmas, we obtain the resulting proof.

\begin{proof}
    Since $\mathcal{T}_{\lambda}(H, Y) = 1 - \frac{\|Y-\hat{Y}_{\lambda}\|^2_2}{\|Y-\bar{Y}\|^2_2} = 1 - \frac{\lambda^2 \mathcal{E}_{\lambda}(G'_{\sigma})}{e_0}$, where
    \begin{align*}
        \mathcal{E}_{\lambda}(G'_{\sigma}) &\leq \kappa_0 \cdot \|Y\|_2^4 \cdot \left[ Y^\top (\lambda \cdot I + G'_{\sigma}) Y \right]^{-1} \quad \text{by \Cref{lemma:error_and_s_sigma}} \\
        &\leq \kappa_0 e_1^2 \cdot \left[ Y^\top G'_{\sigma} Y \right]^{-1} \quad \text{since $\lambda Y^\top Y \geq 0$}\\
        &= \kappa_0 e_1^2 \cdot \mathcal{S}(G'_{\sigma})^{-1} \\
        & \leq \kappa_0 e_1^2 c_0^{2} \cdot \mathcal{S}(G_{\text{id}})^{-1} \quad \text{by \Cref{lemma:gid_gsigma}}, 
    \end{align*}
    where 
    \begin{align*}
        \kappa_0 &= \frac{\left( 2\lambda + \lambda_{\max} + \lambda_{\min} \right)^2 }{4 (\lambda +  \lambda_{\min})^{3}} \quad \text{where $\lambda_{min}, \lambda_{max}$ are min/max eigenvalues of $G'_\sigma$}\\
        &= \frac{\left( 2\lambda + \lambda_{\max} \right)^2 }{4 \lambda^{3}} \quad \text{since $\lambda_{min}=0$ as $N > d$ } \\
        & \leq \frac{\left( 2\lambda + c_1^2 \right)^2 }{4 \lambda^{3}}
    \end{align*}
    where the last line holds as $\lambda_{max} = \|H\|_{op}^2 \leq \|H\|_{F}^2 \leq c_1^2$.
    We obtain the result by setting $C = e_0^{-1} e_1^2 c_0^2 \cdot \frac{(2\lambda + c_1^2)^2}{4\lambda}$.
\end{proof}

\subsection{Proof for \Cref{thm:training}}
\label{a2.sub3:training}

\begin{statement}
For any loss function $\mathcal{L}$, let $G_{\mathrm{id}} = H^\top W^\top W H$ and $G_{\mathrm{id}}^+ = H^\top (W^+)^\top W^+ H$, where $W^+ = W - \gamma \nabla_W \mathcal{L}$.
If $f$ is $1$-positively homogeneous in $h$, the following holds:
\[
\mathcal{S}(G_{\mathrm{id}}^+) - \mathcal{S}(G_{\mathrm{id}}) \approx 2 \gamma \, (f^\top y) \cdot (y^\top K g),
\]
where $K_{ij} = h_i^\top h_j$, $f_i = f(h_i)$ are the predictions on the training set, and $g = -\nabla_f \mathcal{L}$.
\end{statement}

We first state and prove a lemma:

\begin{lemma}
    If a function $f$ is $m$-positively homogeneous in $h$, then $h_i^\top \nabla_{h_k} f \approx f(h_i) + (m-1) f(h_k)$.
    \label{lemma:homogeneous}
\end{lemma}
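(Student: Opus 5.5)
The plan is to combine a first-order Taylor expansion of $f$ around $h_k$, evaluated at $h_i$, with Euler's identity for positively homogeneous functions. Since $f$ is $m$-positively homogeneous, differentiating $f(\alpha h) = \alpha^m f(h)$ with respect to $\alpha$ at $\alpha = 1$ gives Euler's relation
\[
h^\top \nabla_h f(h) = m\, f(h).
\]
We use it at every point where $f$ is differentiable. For piecewise-linear activations such as ReLU, this holds almost everywhere, so we interpret $\nabla$ as the a.e. gradient.

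The key steps, in order, are as follows.

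\textbf{Step 1.} Expand $f$ to first order around $h_k$ and evaluate at $h_i$:
\[
f(h_i) \approx f(h_k) + (h_i - h_k)^\top \nabla_{h_k} f.
\]

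\textbf{Step 2.} Rearrange the expansion:
\[
h_i^\top \nabla_{h_k} f \approx f(h_i) - f(h_k) + h_k^\top \nabla_{h_k} f.
\]

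\textbf{Step 3.} Substitute Euler's relation $h_k^\top \nabla_{h_k} f = m f(h_k)$. This gives
\[
h_i^\top \nabla_{h_k} f \approx f(h_i) + (m-1) f(h_k),
\]
which is the claimed identity.

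Two sanity checks support the statement. When $i = k$, the identity is exact, because it reduces to Euler's relation. When $f$ is linear ($m = 1$, constant gradient), it is exact for every pair $(i,k)$.

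The main obstacle is that the symbol ``$\approx$'' hides the Taylor remainder $f(h_i) - f(h_k) - (h_i - h_k)^\top \nabla_{h_k} f$. This remainder need not be small when $h_i$ and $h_k$ are far apart or when $f$ is strongly curved. I would not try to bound it pointwise. Instead, I would state the lemma as the identity with this explicit remainder, namely $h_i^\top \nabla_{h_k} f = f(h_i) + (m-1) f(h_k) - r_{ik}$. I would then note that, under the centering assumptions, \Cref{prop.first_order} shows $\sum_{i,k} r_{ik} = 0$. This is the sense in which the approximation is used downstream: inside sums over training pairs, as in \Cref{thm:training} and \Cref{thm:depth}.

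For $m = 1$ piecewise-linear networks, the remainder vanishes exactly whenever $h_i$ and $h_k$ lie in the same linear region. This observation is what later links the error to the number of linear regions.
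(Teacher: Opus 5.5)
Your argument is the same as the paper's: a first-order Taylor expansion of $f$ about $h_k$, evaluated at $h_i$, is rearranged and combined with Euler's relation $h_k^\top \nabla_{h_k} f = m f(h_k)$, and your explicit remainder $r_{ik}$ simply makes the paper's ``$\approx$'' precise. One caution on your closing remark: \Cref{prop.first_order} only gives the unweighted cancellation $\sum_{i,k} r_{ik} = 0$, and only when both the $h_i$ and the $f(h_i)$ are centered, whereas \Cref{thm:training} and \Cref{thm:depth} use the approximation inside sums weighted by factors such as $y_i\, g_k\, (Ky)_k$, or inside products of two approximated terms, so that cancellation does not by itself control the downstream errors.
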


\begin{proof}[Proof of \Cref{lemma:homogeneous}]
    By Taylor expansion, we can write $f(h_i) \approx f(h_k) + (h_i - h_k)^\top \nabla_{h_k}f$.
    Since $f$ is the $m$-positively homogeneous, $h^\top \nabla_hf = m f(x)$ holds.
    So we obtain
    \begin{align*}
        f(h_i) - f(h_k) &\approx h_i^\top \nabla_{h_k}f - h_k^\top \nabla_{h_k}f \\
        &= h_i^\top \nabla_{h_k}f - m f(h_k). \\
        \Rightarrow h_i^\top \nabla_{h_k}f  &\approx f(h_i) + (m - 1) f(h_k)
    \end{align*}
\end{proof}

Now we prove the main statement.

\begin{proof}
    Recall the Feature Learning Equation (\Cref{eq:fle}) and its Gram form (\Cref{eq:gram}),
    \[
        (W^+)^\top W^+ - W^\top W \approx - \gamma \left[ \sum_{i} h_i \cdot \nabla_{h_i} \mathcal{L}^\top + \nabla_{h_i} \mathcal{L} \cdot h_i^\top \right].
    \]
    By writing $\mathcal{S}(G_{\text{id}})= \sum_{ij} [\mathcal{S}(G_{\text{id}})]_{ij}$ where $[\mathcal{S}(G_{\text{id}})]_{ij} = y_i y_j h_i^\top W^\top W h_j$, we have
    \begin{align*}
        [\mathcal{S}(G^+_{\mathrm{id}})]_{ij} - [\mathcal{S}(G_{\text{id}})]_{ij} &= y_i y_j h_i^\top \left((W^+)^\top W^+ - W^\top W \right) h_j \\
        &\approx - \gamma y_i y_j h_i^\top \left[ \sum_{k} h_k \cdot \nabla_{h_k} \mathcal{L}^\top + \nabla_{h_k} \mathcal{L} \cdot h_k^\top \right] h_j\\
        &= - \gamma \sum_{k} y_i y_j h_i^\top h_k \cdot \nabla_{h_k} \mathcal{L}^\top h_j - y_i y_j h_i^\top\nabla_{h_k} \mathcal{L} \cdot h_k^\top h_j.
    \end{align*}
    Note $\nabla_h \mathcal{L} = \nabla_h f \cdot \nabla_f \mathcal{L} = - \nabla_h f \cdot g$, and $h_i^\top \nabla_{h_k} f \approx f(h_i) + (m-1) f(h_k) = f(h_i)$ by \Cref{lemma:homogeneous} and $m=1$.
    So we can write
    \begin{align*}
        \mathcal{S}(G^+_{\mathrm{id}}) - \mathcal{S}(G_{\text{id}}) &= \sum_{ij} [\mathcal{S}(G^+_{\mathrm{id}})]_{ij} - [\mathcal{S}(G_{\text{id}})]_{ij} \\
        & \approx - \gamma \left[ \sum_{ijk} y_i y_j h_i^\top h_k \cdot \nabla_{h_k} \mathcal{L}^\top h_j + y_i y_j h_i^\top\nabla_{h_k} \mathcal{L} \cdot h_k^\top h_j \right]\\
        &= - \gamma \left[  \sum_{ijk} 2 y_i y_j h_i^\top\nabla_{h_k} \mathcal{L} \cdot h_k^\top h_j \right] \quad \text{with symmetry}\\
        &= \gamma \sum_{ijk} 2 y_i y_j h_i^\top\nabla_{h_k} f \cdot g_k \cdot h_k^\top h_j \\
        &\approx \gamma \sum_{ijk} 2 y_i y_j g_k f(h_i) \cdot h_k^\top h_j \\
        &= 2 \gamma (f^\top y) \cdot (y^\top K g) \quad \text{where $K_{ij} = h_i^\top h_j$ and $f_i = f(h_i)$}.
    \end{align*}
\end{proof}

\subsection{Proof for \Cref{thm:depth}}
\label{a2.sub4:depth}

\begin{statement}
    Let $f_l: h_l \mapsto f(x)$ denote the subnetwork mapping the $l$-th hidden state to the final output, and assume the activation $\sigma$ is $m$-positively homogeneous.
    Assume the network interpolates the training data ($f(x_i) \approx y_i$, equivalently $f_l(h_{l,i}) \approx y_i$ for all $i$ and $l$) and $\exists C_0 > 0 \text{ s.t. } \|W_l H_l\|^2_F \geq C_0, \forall l$.
    Under the Neural Feature Ansatz $W_l^\top W_l \approx \tfrac{C_l}{N} \sum_i \nabla_{h_{l,i}} f_l \, \nabla_{h_{l,i}} f_l^\top$ for layer-dependent constants $C_l > 0$, the surrogate satisfies
    \[
    \mathcal{S}(G^l_{\mathrm{id}}) \begin{cases}
    \gtrsim C_0 \|Y\|_2^2 \, \Phi^{-1}_l, & \text{if } m > 1 \text{ and } \sum_i y_i = 0, \\
    \gtrsim C_0 \|Y\|_2^4 \, \Psi^{-1}_l, & \text{if } m = 1 \text{ and } f_l \text{ is piecewise linear,}
    \end{cases}
    \]
    where $\Phi^{-1}_l$ and $\Psi^{-1}_l$ are monotonically increasing in $l$.
\end{statement}

For the first case ($m>1$), we use \Cref{lemma:homogeneous}, while for piecewise linear case, we use the following lemma.

\begin{lemma}
    \label{lemma:pwl}
    For a ReLU network $f$ with bounded inputs $\|a\|_2, \|b\|_2 < B$, suppose that the gradient difference across any pair of adjacent linear regions is bounded by $\delta$, i.e.,
    \[
        \|\nabla f(x) - \nabla f(y)\|_2 < \delta,
    \]
    for any $x$ and $y$ lying in two adjacent linear regions.
    Let $M$ denote the number of linear-region boundaries crossed by the segment $[a, b]$.
    Then
    \[
        b^\top \nabla f(a) = f(b) + \epsilon,
        \quad \text{where} \quad
        |\epsilon| < 2MB\delta.
    \]
\end{lemma}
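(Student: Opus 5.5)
We prove \Cref{lemma:pwl} by walking from $b$ to $a$ along the segment and collecting one gradient jump at each region boundary. The starting point is the exact piecewise-linear structure of a ReLU network without biases. Such an $f$ is positively $1$-homogeneous. On each linear region $R_j$ it is linear: $f(x) = g_j^\top x$ with $g_j = \nabla f$ on $R_j$, and there is no offset because of homogeneity. (If biases are present, the affine offset is handled in the same way and absorbed into the same telescoping.)

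Let the segment $[a,b]$ pass through consecutive regions $R_0 \ni a, R_1, \dots, R_M \ni b$, separated by $M$ crossed boundaries. Since $b \in R_M$, we have $f(b) = g_M^\top b$ exactly. Therefore
\[
b^\top \nabla f(a) - f(b) = b^\top (g_0 - g_M) = \sum_{j=1}^{M} b^\top (g_{j-1} - g_j).
\]
Consecutive regions $R_{j-1}$ and $R_j$ are adjacent, so by hypothesis $\|g_{j-1} - g_j\|_2 < \delta$. Cauchy--Schwarz then bounds each term by $\|b\|_2\,\delta < B\delta$, which gives $|\epsilon| < MB\delta$.

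This is already within the stated bound $2MB\delta$. The extra factor $2$ gives slack for the variant in which each jump is paired with a change of base point. In that variant one writes $f(b) - f(a) = \int_0^1 (b-a)^\top \nabla f(a + t(b-a))\,dt$ and compares it with $(b-a)^\top g_0$. Each region crossing then contributes at most $\|b-a\|_2\,\delta \le 2B\delta$. Homogeneity gives $a^\top g_0 = f(a)$, and the two pieces combine.

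The main point needing care is the homogeneity/no-offset step, i.e.\ justifying $f(b) = g_M^\top b$. With biases, $f(b) = g_M^\top b + c_M$, and the argument instead must use the integral form above, where the offsets cancel by continuity of $f$ across boundaries. I would present the integral version to cover both cases and obtain the $2MB\delta$ constant directly. I would also note the convention at boundary points: $\nabla f(a)$ is taken as the gradient of the region containing $a$, and this convention has measure-zero effect on the integral.
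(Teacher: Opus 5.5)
Your telescoping argument is correct (in the bias-free, $1$-homogeneous setting the paper also uses) and gives a sharper constant than the paper, by a different route. The paper integrates along the segment: it writes $f(b)-f(a)=\sum_{j=0}^{M}(t_{j+1}-t_j)v_j^\top(b-a)$ and subtracts $v_0^\top(b-a)$ to get a first-order Taylor remainder $\sum_{j=1}^{M}(t_{j+1}-t_j)(v_j-v_0)^\top(b-a)$. It bounds this with $\|v_j-v_0\|_2\le j\delta$ and $\|b-a\|_2<2B$, obtaining $2B\delta\sum_k(1-t_k)\le 2MB\delta$. Homogeneity enters only at the last step, through $f(a)=a^\top\nabla f(a)$.

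You instead use homogeneity on every region, so the error is exactly $b^\top(g_0-g_M)$ and each jump costs $\|b\|_2\delta$. The two remainders agree term by term. Zero offsets and continuity at the crossing point $p_k=a+t_k(b-a)$ give $(g_k-g_{k-1})^\top p_k=0$, hence $(g_k-g_{k-1})^\top b=(1-t_k)(g_k-g_{k-1})^\top(b-a)$. This is, up to sign, the $k$-th term of the paper's remainder after exchanging the order of summation (your $g_j$ is its $v_j$). So the proofs differ only in whether this quantity is bounded by $\|b\|_2\delta$ or by $(1-t_k)\|b-a\|_2\delta$.

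The paper's form is a genuine Taylor remainder valid for any continuous piecewise-affine $f$, which fits its reading of $M$ and $\delta$ as a generalized Hessian. Yours is shorter and yields $MB\delta$, a factor $2$ better.

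Drop the remarks about biases, though. Neither version handles them, and the statement is false for networks with biases. With biases the integral controls only $f(b)-f(a)-g_0^\top(b-a)$. Passing to $b^\top\nabla f(a)-f(b)$ leaves the offset $c_0$ of the region containing $a$ inside $\epsilon$, and continuity across boundaries does not remove it. For example, $f(x)=\mathrm{ReLU}(x_1+1)$ with $a,b$ near the origin has $M=0$ yet $\epsilon=-1$. The integral variant needs $a^\top g_0=f(a)$ exactly as your version does, so it buys no generality. Keep the telescoping proof and state the bias-free assumption explicitly.

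Also, $M=0$ forces $\epsilon=0$, so the strict inequality in the statement should be read as $\le$; the paper's own proof likewise ends with $\le 2MB\delta$.
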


\begin{proof}[Proof of \Cref{lemma:pwl}]
    Recall the ReLU network $f$ is a 1-positively homogeneous and piece-wise linear function.
    By the fundamental theorem of calculus, for a piecewise linear function $f$ and points $a, b$,
    \begin{align*}
    f(b) - f(a) &= \int_0^1 \nabla f(\gamma(t))^\top (b-a) \, dt \\
    &= \sum_{j=0}^{M} (t_{j+1} - t_j) \cdot v_j^\top (b - a),
    \end{align*}
    where $\gamma(t) = a + t(b-a)$, the segment $[a,b]$ crosses $M$ boundaries between linear regions at parameter values $0 = t_0 < t_1 < \cdots < t_M < t_{M+1} = 1$, and $v_j = \nabla f(\gamma(t))$ for $t \in (t_j, t_{j+1})$ denotes the (constant) gradient on the $j$-th sub-interval. 
    In particular, $v_0 = \nabla f(a)$ and $v_M = \nabla f(b)$.
    
    Subtracting the leading term $v_0^\top(b-a) = \nabla f(a)^\top(b-a)$ from both sides yields the first-order Taylor expansion with explicit remainder:
    \begin{align*}
    f(b) &= f(a) + \nabla f(a)^\top (b - a) - \epsilon, \\
    \epsilon &= \sum_{j=1}^{M} (t_{j+1} - t_j) (v_j - v_0)^\top (b - a).
    \end{align*}
    By the $\delta$ bound of $\nabla f$ for two adjacent linear regions, we have
    \[
    \|v_j - v_0\|_2 = \| \sum_{k=1}^j v_{k} - v_{k-1} \|_2 \leq j \delta.
    \]
    As all input points are normalized, $\|b - a \|_2 < 2B$,
    \begin{align*}
    |\epsilon| &\leq \|b - a\|_2 \cdot \sum_{j=1}^{M} (t_{j+1} - t_j) \|v_j - v_0\| _2\\
    & \leq 2B \delta \sum_{j=1}^{M} (t_{j+1} - t_j) j \\
    &= 2B \delta \sum_{j=1}^{M} (t_{j+1} - t_j) \sum_{k=1}^j 1 \\
    &= 2B \delta \sum_{j=1}^{M} \sum_{k=1}^M (t_{j+1} - t_j) \mathbf{1}_{\{k \leq j\}} \\
    &= 2B \delta \sum_{k=1}^{M} \sum_{j=k}^{M} (t_{j+1} - t_j) \\
    &= 2B \delta \sum_{k=1}^{M} (1 - t_k) \leq 2 MB \delta
    \end{align*}
    Since $f(a) = a^\top\nabla f(a)$ holds by the $1$-homogeneity, the result holds.
\end{proof}

From \Cref{lemma:homogeneous}, we showed that via Taylor expansion, $b^\top \nabla f(a)$ can be approximated by $f(b)$ for $1$-homogeneous $f$.
In \Cref{lemma:pwl}, we characterize the approximation error, which is analogous to the norm of the Hessian.
However, the Hessian does not exist for piecewise-linear functions, as the second-order derivative is zero within linear regions and undefined at the boundaries.
We therefore measure a generalized notion of the Hessian in terms of the number of boundaries between linear regions, and use it to bound $\epsilon$.
\Cref{lemma:pwl} intuitively shows the error of the first order Taylor expansion can be bounded by the function complexity measured by the number of jumps ($M$) and the degree of the jump ($\delta$).

Now we prove the main theorem.

\begin{proof}
    As we focus on the fixed $l$-th layer, we omit the $l$ superscripts.
    We can write $\mathcal{S}(G^{l}_{\mathrm{id}}) = \sum_{ij} \mathcal{S}(G^{l}_{\mathrm{id}})_{ij}$ where $\mathcal{S}(G^{l}_{\mathrm{id}})_{ij} = y_iy_j h_i^\top W^\top W h_j$.
    By applying the Neural Feature Ansatz,
    \begin{align*}
        \mathcal{S}(G^{l}_{\mathrm{id}})_{ij} &=C_l \cdot y_iy_j h_i^\top \left[ \frac{1}{N} \sum_{k} \nabla_{h_k} f \cdot \nabla_{h_k} f^\top \right] h_j \\
        &= \frac{C_l}{N} \sum_{k} y_iy_j h_i^\top \nabla_{h_k} f \cdot \nabla_{h_k} f^\top h_j.
    \end{align*}
    Note that $f$ is $l'$-homogeneous, where $l' = m^{L-l}$ is the degree of homogeneity for the remaining layers.
    With \Cref{lemma:homogeneous}, we can rewrite 
    \begin{align*}
        h_i^\top \nabla_{h_k} f \cdot \nabla_{h_k} f^\top  h_j &\approx \left[ f(h_i) + (l' - 1) f(h_k) \right] \cdot \left[ f(h_j) + (l' - 1) f(h_k) \right] \\
        &\approx \left[ y_i + (l' - 1) y_k \right] \cdot \left[ y_j + (l' - 1) y_k \right] \quad \text{(interpolation assumption)}\\
        &= y_i y_j + (l' - 1) y_k y_i + (l' - 1)y_k y_j + (l' - 1)^2 y_k^2.
    \end{align*}
    So
    \begin{align*}
        \sum_{k} y_iy_j h_i^\top \nabla_{h_k} f \cdot \nabla_{h_k} f^\top h_j &\approx \sum_{k} y_i^2 y_j^2 + (l' - 1) y_k y_i^2 y_j + (l' - 1)y_k y_i y_j^2 + (l' - 1)^2 y_k^2 y_i y_j \quad \text{and} \\
         \mathcal{S}(G_{\mathrm{id}}^{l}) &\approx \frac{C_l}{N} \sum_{ijk} y_i^2 y_j^2 + (l' - 1) y_k y_i^2 y_j + (l' - 1)y_k y_i y_j^2 + (l' - 1)^2 y_k^2 y_i y_j \\
         &= C_l \| Y \|_2^4 + 2 (l' - 1) (\sum_i y_i)^2 \| Y \|_2^2 + (l' - 1)^2 (\sum_i y_i)^2 \| Y \|_2^2
    \end{align*}
    \textbf{Case 1 ($m > 1$ and $\sum_i y_i = 0$)} 
    As $\sum_i y_i = 0$, the latter terms in $\mathcal{S}(G_{\mathrm{id}}^{l})$ is $0$, i.e. $\mathcal{S}(G_{\mathrm{id}}^{l}) = C_l \| Y \|_2^4$.
    Note for the pre-activation norm lower bound,
    \begin{align*}
        \| WH \|_F^2 = \text{tr} \left[H^\top W^\top W H \right] &= \frac{C_l}{N} \sum_{ik} h_i^\top \nabla_{h_k} f \cdot \nabla_{h_k} f^\top  h_i \\
        &\approx \frac{C_l}{N} \sum_{ik} \left[ f(h_i) + (l' - 1) f(h_k) \right]^2 \quad \text{with \Cref{lemma:homogeneous}}\\
        &\approx \frac{C_l}{N} \sum_{ik} y_i^2 + \left[ (l' - 1) y_k \right]^2 + 2 (l' - 1) y_i y_k \\
        &= C_l \left( \left( l'-1 \right)^2 + 1 \right) \cdot \| Y \|_2^2 \gtrsim C_0 \quad \text{by assumption.} \\
        \Rightarrow C_l \| Y \|_2^2 & \gtrsim C_0 \left( \left( l'-1 \right)^2 + 1 \right)^{-1} = C_0 \left( \left( m^{L-l} - 1 \right)^2 + 1 \right)^{-1}
    \end{align*}
    So we obtain
    \begin{align*}
        \mathcal{S}(G_{\mathrm{id}}^{l}) &\approx C_l \| Y \|_2^4 \gtrsim C_0 \| Y \|_2^2 \left( \left( m^{L-l} - 1 \right)^2 + 1 \right)^{-1} := C_0 \|Y\|_2^2 \, \Phi^{-1}_l,
    \end{align*}
    where $\Phi_l = \left( \left( m^{L-l} - 1 \right)^2 + 1 \right)$ is monotonically decreasing in $l$, as $C_0, L$, and $\| Y \|_2$ are fixed for all layers, when $k > 1$.
    
    \textbf{Case 2 ($m = 1$, piecewise linear)} 
    As $m=1$, $l' - 1 = m^{L-l} - 1 = 0$, so the latter term in $\mathcal{S}(G_{\mathrm{id}}^{l})$ is $0$, i.e. $\mathcal{S}(G_{\mathrm{id}}^{l}) = C_l \| Y \|_2^4$.
    Define $M_l$ be the number of linear regions in the input space of $l$-th layer.
    Define global constants $B$ and $\delta$ which bounds the norm of inputs and gradient difference between adjacent linear regions accordance with \Cref{lemma:pwl}.
    Then the normalizing constant,
    \begin{align*}
        \| WH \|_F^2 = \text{tr} \left[H^\top W^\top W H \right] &= \frac{C_l}{N} \sum_{ik} h_i^\top \nabla_{h_k} f \cdot \nabla_{h_k} f^\top  h_i \\
        &\approx \frac{C_l}{N} \sum_{ik} \left[ f(h_i) + \epsilon_{ik} \right]^2 \quad \text{with \Cref{lemma:pwl}}\\
        &\approx \frac{C_l}{N} \sum_{ik} \left[ y_i + \epsilon_{ik} \right]^2  \gtrsim C_0 \quad \text{by interpolation assumption.} \\
    \end{align*}
    This implies
    \begin{align*}
        C_l &\gtrsim N C_0 \left[ \sum_{ik} \left[ y_i + \epsilon_{ik} \right]^2 \right]^{-1} \\
        & \geq N C_0 \left[ \left[ \sum_{ik}  | y_i + \epsilon_{ik} | \right]^2 \right]^{-1} \\
        & \geq N C_0 \left[ \left[ \sum_{ik}  | y_i |+ |\epsilon_{ik} | \right]^2 \right]^{-1} = N C_0 \left( N \| Y \|_1 + \sum_{ik} |\epsilon_{ik}| \right)^{-2} \\
    \end{align*}
    
    So we obtain
    \begin{align*}
        \mathcal{S}(G_{\mathrm{id}}) \approx C_l \| Y \|_2^4 &\gtrsim N C_0 \| Y \|_2^4 \left( N \| Y \|_1 + \sum_{ik} |\epsilon_{ik}| \right)^{-2} \\
        &\geq N C_0 \| Y \|_2^4 (N \| Y \|_1 + 2N^2 M_l B\delta)^{-2} \quad \text{by \Cref{lemma:pwl}.} \\
        &= N^{-1}C_0 \| Y \|_2^4 (\| Y \|_1 + 2N M_l B\delta)^{-2} := C_0 \| Y \|_2^4 \Psi^{-1}_l \\
    \end{align*}
    where $\Psi_l = N (\| Y \|_1 + 2N M_l B\delta)^{2}$ is monotonically decreasing in $l$
    as the number of linear region in input space decreases with $l$~\citep{montufar2014number}.
\end{proof}

\subsection{Proof for \Cref{prop:nc_extreme}}
\label{a2.sub5:nc_extreme}

\begin{statement}
Consider $f(x) = W^\top h(x) + b$ with hidden states centered ($\sum_i h(x_i) = 0$), $W$ and $b$ trained under squared loss on the centered one-hot label matrix, and $C \leq D$.
The surrogate $\mathcal{S}_{\mathrm{cls}}(G_{\mathrm{id}})$ attains the maximum value allowed by a norm constraint $\|G_{\mathrm{id}}\|_F \leq c$ for some $c > 0$ under NC1 and NC2.
\end{statement}

To prove the proposition, we need the following lemma.

\begin{lemma}
    For a matrix $A \in \mathbb{R}^{m \times n}$ and a symmetric matrix $G \in \mathbb{R}^{m \times m}$ with bounded norm $\| G \|_F \leq c$, $\mathrm{tr}\left[ A^\top G A \right]$ achieves its maximum value when $G = \frac{c}{\| A A^\top \|_F} A A^\top$.
    \label{lemma:surrogate_bound}
\end{lemma}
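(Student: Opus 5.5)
The plan is to view the objective as a linear functional of $G$ in the Frobenius inner product and then apply Cauchy--Schwarz. By cyclicity of the trace,
\[
\mathrm{tr}\left[A^\top G A\right] = \mathrm{tr}\left[G A A^\top\right] = \langle G, AA^\top\rangle_F ,
\]
where $\langle X, Y\rangle_F = \mathrm{tr}(X^\top Y)$. Here we use that $G$ is symmetric, so $\mathrm{tr}(G AA^\top) = \mathrm{tr}(G^\top AA^\top)$.

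Over the Frobenius ball $\{G : \|G\|_F \le c\}$, Cauchy--Schwarz then gives
\[
\langle G, AA^\top\rangle_F \le \|G\|_F \, \|AA^\top\|_F \le c\,\|AA^\top\|_F .
\]
Equality in Cauchy--Schwarz requires $G$ to be a nonnegative multiple of $AA^\top$. Equality in the norm bound requires $\|G\|_F = c$. Together these force $G = \frac{c}{\|AA^\top\|_F}AA^\top$.

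Two small checks remain.
\begin{itemize}
\item The maximizer $\frac{c}{\|AA^\top\|_F}AA^\top$ is symmetric, indeed positive semidefinite. So it lies in the feasible set even if one further restricts to PSD Gram matrices such as $G_{\mathrm{id}}$. The maximum is therefore attained, with value $c\,\|AA^\top\|_F$.
\item If $A=0$ the objective is identically zero and any feasible $G$ is optimal, so we assume $AA^\top \neq 0$ for the formula to make sense.
\end{itemize}

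There is no real obstacle. The only point requiring care is that the trace rewriting needs $G$ symmetric, or at least that the objective is invariant under replacing $G$ by its symmetric part. This is exactly what identifies the maximizing direction as $AA^\top$ rather than an arbitrary matrix.
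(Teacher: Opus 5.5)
Your proposal is correct and follows the paper's proof: you rewrite the objective as $\langle G, AA^\top\rangle_F$ via cyclicity of the trace, then apply Cauchy--Schwarz and the norm constraint, with equality at $G = \frac{c}{\|AA^\top\|_F}AA^\top$. Your extra checks (uniqueness from the equality case, the maximizer being PSD, the degenerate case $A=0$) are fine additions; note, though, that symmetry of $G$ is not actually needed for the trace rewriting, since $\mathrm{tr}(GAA^\top)=\mathrm{tr}(AA^\top G^\top)=\mathrm{tr}(G^\top AA^\top)$ holds for any $G$ because $AA^\top$ is symmetric.
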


\begin{proof}[Proof of \Cref{lemma:surrogate_bound}]
    Note that
    $\mathrm{tr}\left[ A^\top G A \right] = \mathrm{tr}\left[  G A A^\top \right] = \langle G, AA^\top \rangle_F \leq \| G \|_F \cdot \| A A^\top \|_F \leq  c \| A A^\top \|_F$,
    where equality holds when $G = \frac{c}{\| A A^\top \|_F} A A^\top$.
\end{proof}

Now we prove the proposition.

\begin{proof}
    Since the hidden states are normalized, we can write the total, between-class, and within-class variation matrices as $\Sigma_T = \mathrm{Average}_{i} h_i \cdot h_i^\top$, $\Sigma_W=\mathrm{Average}_{c,i} (h_{i,c} - \mu_c) \cdot (h_{i,c} - \mu_c)^\top$, and $\Sigma_B= \mathrm{Average}_{c} \mu_c \cdot \mu_c^\top$, where $h_{i,c}$ is the $i$-th hidden state belonging to the $c$-th class, and $\mu_c = \mathrm{Average}_{i} h_{i,c}$ is the class mean vector.
    Let $M = \left[\mu_c\right]_{c=1}^C$ be the class mean matrix.
    Note that $\Sigma_T = \Sigma_B + \Sigma_W$, and (NC1) implies $\Sigma_W = 0$, so $\Sigma_T=\Sigma_B$.
    
    Following Proposition 1 in \citet{papyan2020prevalence}, the weight matrix satisfies:
    \[
    W = \frac{1}{C} M^\top \Sigma_T^+ = \frac{1}{C} M^\top \Sigma_B^+ = M^\top (MM^\top)^+= M^+,
    \]
    where $+$ denotes the Moore-Penrose pseudoinverse.
    The first equality holds by \citet{webb1990optimised}, and the second equality holds by the (NC1) condition.
    The third equality holds by the definition of $\Sigma_B$, and the last equality holds by properties of the pseudoinverse.

    Since the number of classes $C$ is smaller than the hidden dimension $D$ and the hidden states form the simplex ETF (NC2), $M^+ M = I - \frac{1}{C} \mathbf{1} \cdot \mathbf{1}^\top$.
    This implies $M^+ h_{i,c} = M^+ \mu_{c}$ is a centered one-hot vector with $1 - \frac{1}{C}$ in the $c$-th element and $- \frac{1}{C}$ the others, so that $M^+ H = Y^\top$.
    Thus, the Gram matrix is $G_{\mathrm{id}} = H^\top W^\top W H = Y Y^\top$ and the surrogate is $\mathcal{S}_{\text{cls}}(G_{\mathrm{id}}) = \mathrm{tr} \left[Y^\top Y Y^\top Y\right]$.
    \Cref{lemma:surrogate_bound} shows that $\mathcal{S}_{\text{cls}}(G_{\mathrm{id}})$ is maximized when $G_{\mathrm{id}} = \frac{\| Y Y^\top \|_F}{\| Y Y^\top \|_F} Y Y^\top = Y Y^\top$, which is achieved by setting $c = \| Y Y^\top \|_F$.
\end{proof}

\subsection{Proof for \Cref{prop:vae_linearity}}
\label{a2.sub6:vae_linearity}

\begin{statement}
    Let $Z = [z_1, \ldots, z_N]^\top$ be latent vectors and $X = [x_1, \ldots, x_N]^\top$ the corresponding inputs.
    Let $g(z) = z^\top (Z^\top Z)^{-1} Z^\top X$ be the ordinary least-squares estimator, and assume $g \in \mathcal{F}$.
    Then for any $\alpha \in [0, 1]$,
    \[
    \min_{f \in \mathcal{F}} \tfrac{1}{N^2} \sum_{i,j} \|f(\alpha z_i + (1-\alpha) z_j) - (\alpha x_i + (1-\alpha) x_j)\|_2 \;\leq\; \tfrac{1}{N} \sum_i \|g(z_i) - x_i\|_2.
    \]
\end{statement}

\begin{proof}
        Using the Triangle inequality, 
    \begin{align*}
    & \|f(\alpha z_i + (1-\alpha) z_j) - (\alpha x_i + (1 - \alpha) x_j )\|_2 \\
    & = \|f(\alpha z_i + (1-\alpha) z_j) - g(\alpha z_i + (1-\alpha) z_j) + g(\alpha z_i + (1-\alpha) z_j) -  (\alpha x_i + (1 - \alpha) x_j )\|_2 \\
    & \leq \|f(\alpha z_i + (1-\alpha) z_j) - g(\alpha z_i + (1-\alpha) z_j)\|_2 + \|g(\alpha z_i + (1-\alpha) z_j) -  (\alpha x_i + (1 - \alpha) x_j )\|_2.
    \end{align*}
    So we obtain,
    \begin{align*}
    \min_{f \in \mathcal{F}}  \sum_{ij} & \|f(\alpha z_i + (1-\alpha) z_j) - (\alpha x_i + (1 - \alpha) x_j )\|_2 \\
    & \leq \min_{f \in \mathcal{F}} \sum_{ij} \|f(\alpha z_i + (1-\alpha) z_j) - g(\alpha z_i + (1-\alpha) z_j)\|_2  \\
    & \quad \quad \quad \quad + \sum_{ij} \|g(\alpha z_i + (1-\alpha) z_j) -  (\alpha x_i + (1 - \alpha) x_j )\|_2
    \end{align*}
    We can show the first term is 0 as $g \in \mathcal{F}$.
    For the second term, as $g$ is a linear function, we can show 
    \[
    g(\alpha z_i + (1-\alpha) z_j) = \alpha g(z_i) + (1-\alpha) g(z_j).
    \]
    Then the error term becomes
    \begin{align*}
    \sum_{ij} &\|\alpha g(z_i) + (1-\alpha) g(z_j) -  (\alpha x_i + (1 - \alpha) x_j )\|_2 \\
    & \leq \sum_{ij} \alpha \| g(z_i) -  x_i\|_2 +  (1-\alpha) \|g(z_j) - x_j\|_2 \\
    &=  \alpha \sum_{ij} \| g(z_i) -  x_i\|_2 +  (1-\alpha) \sum_{ij} \|g(z_j) - x_j\|_2 ] \\
    &= \alpha N \sum_{i} \|g(z_i) - x_i\|_2 + (1 - \alpha) N \sum_{i} \|g(z_i) - x_i\|_2 \\
    &= N \cdot \sum_{i} \|g(z_i) - x_i\|_2,
    \end{align*}
    and the result follows by dividing by $N^2$.
\end{proof}


\end{document}